\documentclass{article}

\usepackage{microtype}
\usepackage{graphicx}
\usepackage{subcaption}
\usepackage{booktabs} 

\usepackage{hyperref}

\usepackage[accepted]{icml2026}

\usepackage{amsmath}
\usepackage{amssymb}
\usepackage{mathtools}
\usepackage{amsthm}

\usepackage[capitalize,noabbrev]{cleveref}

\theoremstyle{plain}
\newtheorem{theorem}{Theorem}[section]

\newtheorem{lemma}[theorem]{Lemma}
\newtheorem{corollary}[theorem]{Corollary}
\theoremstyle{definition}
\newtheorem{definition}[theorem]{Definition}

\theoremstyle{remark}

\usepackage[textsize=tiny]{todonotes}

\usepackage{algorithm}
\usepackage{algorithmic}
\usepackage{microtype}
\usepackage{graphicx}
\usepackage{booktabs} 
\usepackage{tcolorbox}
\usepackage{amsmath}
\usepackage{amssymb}
\usepackage{mathtools}
\usepackage{amsthm}
\usepackage{cancel}
\usepackage{pifont}
\usepackage{utfsym}
\usepackage{subcaption}
\usepackage{multirow} 

\icmltitlerunning{Unbiased Alignment for Large Language Models with Noisy Preferences}

\begin{document}

\twocolumn[
  \icmltitle{Unbiased Alignment for Large Language Models with Noisy Preferences}



  \icmlsetsymbol{equal}{*}

  \begin{icmlauthorlist}
    \icmlauthor{Jialiang Wang}{hit,cityu}
    \icmlauthor{Xianming Liu}{hit}
    \icmlauthor{Xiong Zhou}{hit}
    \icmlauthor{Hui Liu}{cityu}
    \icmlauthor{Haoliang Li}{cityu}

  \end{icmlauthorlist}

  \icmlaffiliation{hit}{Harbin Institute of Technology}
  \icmlaffiliation{cityu}{City University of Hong Kong}

  \icmlcorrespondingauthor{Xianming Liu}{csxm@hit.edu.cn}

  \icmlkeywords{Machine Learning, ICML}

  \vskip 0.3in
]



\printAffiliationsAndNotice{}  

\begin{abstract}
The alignment of large language models with human preferences is commonly achieved through Reinforcement Learning from Human Feedback or Direct Preference Optimization. 
However, these methods are vulnerable to the significant noise prevalent in real-world preference datasets. 
To address this critical issue, we present a theoretical framework for unbiased alignment, introducing the \textit{Unbiased Reward Model} (URM) loss and the \textit{Unbiased Direct Preference Optimization} (UDPO) loss. 
By mathematically correcting the distortion induced by preference noise, our novel objectives enable unbiased model training directly from noisy datasets, without requiring clean ground-truth supervision.
We provide rigorous theoretical analyses demonstrating that our methods are noise-tolerant, parameter downward compatible, and classification-calibrated. 
Comprehensive experiments across diverse datasets demonstrate that our approaches outperform state-of-the-art baselines. Code available at: \href{https://github.com/cswjl/unbiased-alignment}{https://github.com/cswjl/unbiased-alignment}.

\end{abstract}
\section{Introduction}

Large language models (LLMs) have achieved remarkable progress in various fields \citep{llm_survy, vlm_survey}. The preference alignment following pre-training and supervised fine-tuning has become a standard stage in LLM training pipelines \citep{rlhf_gpt, deepseekv3}.
This stage plays a crucial role in improving the model's ability to follow human preferences and enhancing its overall performance \citep{dpo}
However, real-world preference datasets often contain a significant proportion of noisy preference pairs, usually ranging from 20\% to 40\% \citep{preference_noise}.
As illustrated in Figure~\ref{fig:example}, noisy preferences from annotators can introduce ambiguity into the learning signal. 
These noisy preferences often stem from a lack of professional knowledge, human carelessness, or social bias, significantly reducing the performance and safety of LLMs \citep{understanding_dl, preference_noise}. 
Therefore, aligning with noisy preferences presents a critical challenge during the post-training phase of LLMs \citep{preference_noise, rdpo, drdpo}.

\begin{figure}[t]
    \centering

    \includegraphics[width=2.5in]{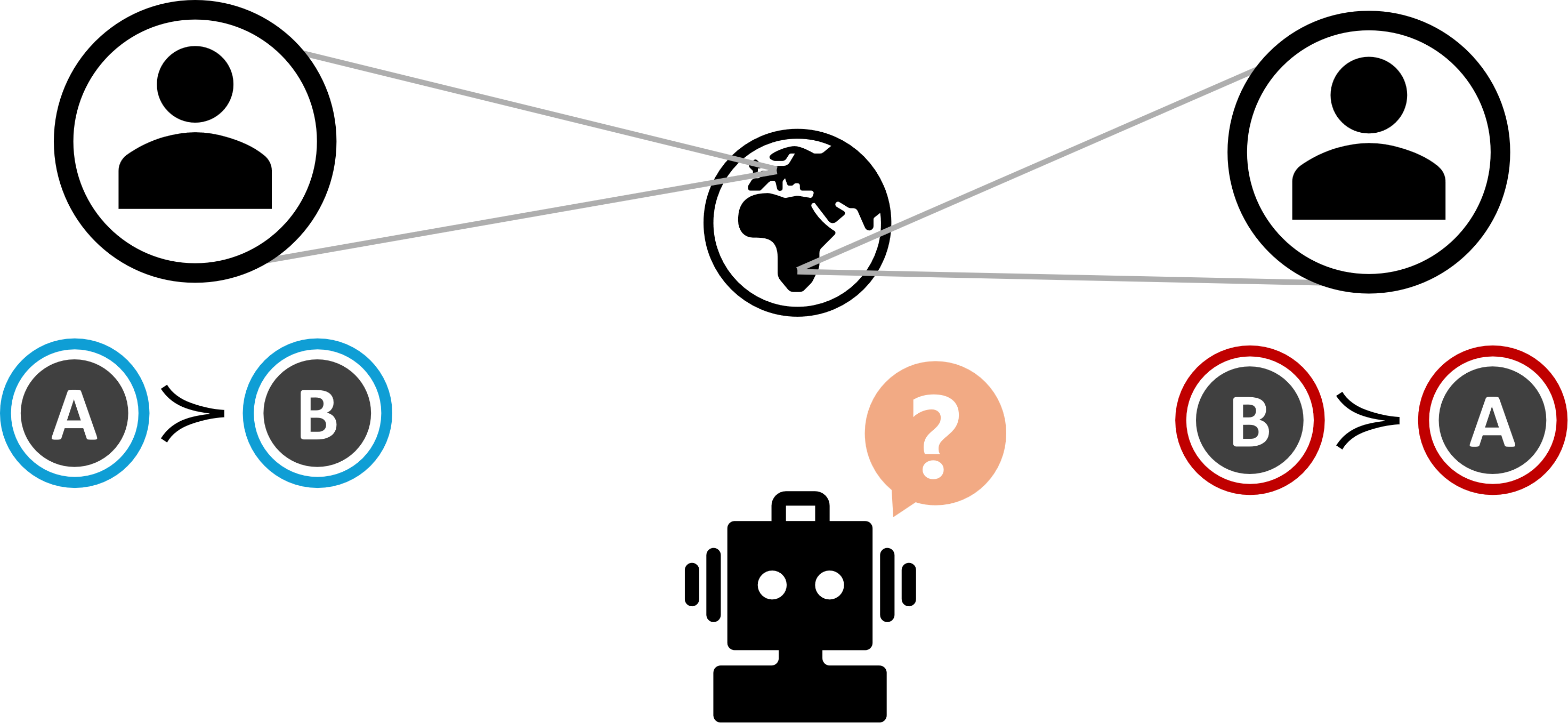}

    \caption{An illustration of noisy preferences in LLM alignment.}
    \label{fig:example}
    \vskip-10pt
\end{figure}

While several methods for alignment with noisy preferences have been proposed recently, they often suffer from significant limitations. Specifically, these approaches either fail to achieve full noise-tolerance \cite{drdpo, ropo} or require a precise estimation of the noise rate in the dataset \cite{cdpo, rdpo}, which is unfeasible in reality. 
In this work, we independently propose a unified theoretical framework for unbiased alignment. Our core insight is that preference noise follows a statistically modeled transition; by mathematically inverting this process, we can recover the unbiased model even when training solely on noisy data. We consider the two dominant paradigms for LLM alignment: Reinforcement Learning from Human Feedback (RLHF) \citep{rlhf,rlhf_gpt} and Direct Preference Optimization (DPO) \cite{dpo}. We separately explore how to achieve unbiased alignment of these two strategies using similar theoretical ideas.

For the RLHF paradigm, preference noise primarily affects the reward model training stage, where a noisy reward model can propagate erroneous signals to the subsequent RL phase. 
We characterize the distortion of reward signals caused by noisy preferences and establish a mathematical transformation relationship between unbiased and noisy reward models. Based on this analysis, we derive the \textit{Unbiased Reward Model} (URM) loss.
This loss function enables the training of an unbiased reward model directly from noisy data, effectively mitigating noise-induced bias at the first stage of RLHF.

For the DPO paradigm, which simplifies RLHF by directly optimizing the policy without an explicit reward model, noisy preferences can also substantially bias the learned policy.
We demonstrate how noise distorts the implicit reward signal and establish the transformation relationship between the unbiased and noisy policy models. Subsequently, we derive the \textit{Unbiased Direct Preference Optimization} (UDPO) loss. 
This loss allows for the direct optimization of an unbiased policy model, maintaining the simplicity of DPO while imparting robust capability.

The key to the success of DPO lies in satisfying the mathematical equivalence with RLHF. 
We demonstrate that this equivalence extends to our proposed URM and UDPO objectives, providing a unified unbiased alignment framework.
Under this framework, we conduct a rigorous theoretical analysis encompassing gradient analysis, parameter downward compatibility, classification calibration, and the excess risk bound.
These results substantiate our method's robustness, parameter generalization, and its capacity to recover a Bayes optimal classifier.


Our contributions are summarized as follows:
\begin{itemize}
\item 
For the RLHF paradigm, we develop a noisy reward model correction theory and derive a robust \textit{Unbiased Reward Model} (URM) loss.

\item 
For the DPO paradigm, we develop a noisy policy model correction theory and derive a robust \textit{Unbiased Direct Preference Optimization} (UDPO) loss.

\item 
We thoroughly explore the theoretical properties of the proposed unbiased alignment framework, demonstrating the robustness and practical utility of our method.

\item 
We conduct extensive experiments across multiple datasets and model families, empirically demonstrating the superiority of our proposed framework.
\end{itemize}

\section{Related Work}
Modern LLM training typically follows a three-stage pipeline \citep{rlhf_gpt, qwen1, llama3, qwen3}:
1) Pre-Training \cite{gpt3}: The model is trained on a large-scale corpus by maximizing the likelihood of the next token. 2) Supervised Fine-Tuning (SFT) \cite{t5, self_instruct}: The pre-trained model is fine-tuned on high-quality instruction data, yielding the SFT model $\pi_{\text{sft}}$. 3) Preference Alignment \cite{rlhf, rlhf_gpt, dpo}: The SFT model is further trained on preference data to align with human values and preferences. This work focuses specifically on the preference alignment stage.

\noindent\textbf{Preference Alignment.}
RLHF \cite{rlhf, rlhf_gpt} is the foundational paradigm for preference alignment. 
It traditionally involves training a reward model from preference data, followed by policy optimization using an RL algorithm, most notably PPO \cite{ppo} and GRPO \cite{grpo}. 
However, the RLHF pipeline is often criticized for its substantial computational overhead and optimization instability.
To address these challenges, DPO \citep{dpo} has emerged as a compelling alternative that streamlines RLHF by directly optimizing the policy from preference pairs.
This has inspired a growing body of work on supervised preference optimization, such as SLiC-HF \citep{slic}, IPO \citep{ipo}, KTO \citep{kto}, SimPO \citep{simpo}, and DiscoPOP \citep{discopop}. 
More recently, increasing attention has been devoted to alignment under noisy preferences, which is the central focus of this paper. 
rDPO \citep{rdpo} proposes a provably robust alignment objective, but it requires a precise estimation of the dataset noise rate, which is difficult to obtain in practice. Other methods, such as ROPO \citep{ropo} and Dr.DPO \citep{drdpo}, introduce alternative loss functions to mitigate the impact of noisy preferences, but they remain limited in achieving full noise-tolerance.

\section{Preliminary}

In this section, we elaborate on two mainstream paradigms for preference alignment: RLHF \cite{rlhf, rlhf_gpt} and DPO \cite{dpo}, which form the foundations of our work. We also introduce the noisy preference model used in our analysis.

\noindent\textbf{RLHF.}
RLHF typically consists of two stages. In the first stage, a reward model $r_\phi$ is trained on a static preference dataset $\mathcal{D} = \{x^{(i)}, y_w^{(i)}, y_l^{(i)}\}_{i=1}^{N}$, where $y_w$ denotes the response preferred over $y_l$ for the prompt $x$, written as $y_w \succ y_l$. Under the widely used Bradley-Terry (BT) model \citep{bt}, human preference is modeled as:
\begin{equation}
   p(y_w \succ y_l |x) = \sigma\big(r(x, y_w) - r(x, y_l)\big), 
   \label{eq:bt_model}
\end{equation}
where $\sigma$ is the sigmoid function. Following standard practice, we assume that the ground-truth preference $p^*(y_w\succ y_l)=1$. The reward model $r_\phi$ is optimized by maximum likelihood estimation, minimizing the following reward model (RM) loss:
\begin{equation}
\mathcal{L}_\text{RM} = -
 \log \sigma \big( r_\phi(x, y_w) - r_\phi(x, y_l) \big).
\end{equation}

In the second stage, the policy $\pi_\theta$ is optimized to maximize the learned reward. To prevent the policy from deviating excessively from the reference model $\pi_{\text{ref}}$, a KL-divergence penalty is incorporated \cite{jaques2017sequence, jaques2020human}:
\begin{equation}
\max_{\pi_\theta} \mathbb{E}_{x \sim \mathcal{D}, y \sim \pi_\theta(y|x)} \left[ r_\phi(x, y) \right] - \beta \mathbb{D}_{\text{KL}} \left[ \pi_\theta(y | x) \parallel \pi_{\text{ref}}(y | x) \right],
\label{eq:kl_objective}
\end{equation}
where $\beta$ is a hyperparameter. In practice, both $\pi_\theta$ and $\pi_{\text{ref}}$ are initialized from the SFT model $\pi_{\text{sft}}$. Due to the discrete nature of the language generation, this objective is non-differentiable and is typically optimized using an RL algorithm.

\noindent\textbf{DPO.}
DPO simplifies the RLHF pipeline by directly optimizing the policy model $\pi_\theta$ on preference data through a supervised preference objective, eliminating the need to train an explicit reward model and employ an RL algorithm. \citet{dpo} show that the optimal solution to the RLHF objective in Equation~\ref{eq:kl_objective} can be expressed in closed form as:
\begin{equation}
    r(x,y) = \beta\log\frac{\pi^*(y|x)}{\pi_\text{ref}(y|x)} + \beta\log Z(x),
\label{eq:dpo_reward}
\end{equation}
where $Z(x) = \sum_y \pi_\text{ref}(y|x)\exp\big(\frac{1}{\beta}r(x,y)\big)$ is the partition function. 
By substituting Equation~\ref{eq:dpo_reward} into the BT model in Equation~\ref{eq:bt_model}, the partition function $Z(x)$ cancels out. This allows for optimizing the policy via a maximum likelihood loss:
\begin{equation}
\begin{split}
\mathcal{L}_{\text{DPO}}&=- 
 \log \sigma \left( \beta \log \frac{\pi_\theta(y_w|x)}{\pi_{\text{ref}}(y_w |x)} - \beta \log \frac{\pi_\theta(y_l|x)}{\pi_{\text{ref}}(y_l|x)} \right) \\
&=-\log \sigma \left( \beta \log \frac{\pi_\theta(y_w|x)\pi_{\text{ref}}(y_l|x)}{\pi_\theta(y_l|x)\pi_{\text{ref}}(y_w |x)} \right).
\end{split}
\end{equation}

\noindent\textbf{Noisy Preference Model.}
We assume that the samples in the dataset $\mathcal{D}^\eta$ are not drawn from the unbiased human preference distribution $\mathcal{P}^*$, but rather from a noisy distribution $\mathcal{P}^\eta$ influenced by factors such as cognitive errors and social biases:
\begin{equation}
p^\eta (y_w\succ y_l|x) = (1-\eta)p^*(y_w\succ y_l|x) + \eta p^*(y_l\succ y_w|x),
\end{equation}
where $\eta \in [0,\frac{1}{2})$ denotes the noise rate, while $p^\eta$ and $p^*$ are the preference probabilities under the distributions $\mathcal{P}^\eta$ and $\mathcal{P}^*$, respectively. 

\section{Unbiased Alignment}
Since real-world preference datasets often contain noise, standard approaches such as RLHF and DPO may struggle to achieve ideal performance. To address this challenge, we investigate the theoretical foundations of noise model correction and propose principled methods for unbiased alignment. Detailed proofs are provided in the Appendix~\ref{ap:proofs}.

\subsection{Unbiased Reward Model Learning}
We first explore the reward model learning stage in RLHF.
When preference data is contaminated with noise, the reward model exhibits a systematic bias, yielding a reward signal that is distorted by the underlying noise distribution. This relationship is characterized as follows:

\begin{lemma}
Under noisy preferences, the normal reward model loss $-\mathbb{E}_{(x,y_w,y_l)\sim \mathcal{P}^\eta}\!\left[\log \sigma\!\big(r_\phi(x,y_w)-r_\phi(x,y_l)\big)\right]$ is equivalent to the following noisy optimization loss:
\begin{multline}
-\mathbb{E}_{(x,y_w,y_l)\sim \mathcal{P^*}} 
\big[
(1-\eta)\,\log \sigma\!\big(r_\phi(x,y_w)-r_\phi(x,y_l)\big) \\
+\eta\,\log \sigma\!\big(r_\phi(x,y_l)-r_\phi(x,y_w)\big)
\big].
\end{multline}
The corresponding optimal noisy reward model satisfies
\begin{multline}
\sigma\!\big(r_\phi^\eta(x,y_w)-r_\phi^\eta(x,y_l)\big) =\\
(1-\eta)\,p^*(y_w\succ y_l|x)
+\eta\,p^*(y_l\succ y_w|x).
\end{multline}
\label{lemm:noisy_reward}
\end{lemma}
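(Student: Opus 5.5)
The plan is to prove the two claims in turn: first rewrite the expectation under $\mathcal{P}^\eta$ as one under $\mathcal{P}^*$ using the noisy preference model, then minimize the resulting risk pointwise over the reward margin.

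\textbf{Step 1: change of measure.} Fix a prompt $x$ and an unordered pair of responses $\{y_1,y_2\}$. Under $\mathcal{P}^\eta$ the pair is labeled $y_1\succ y_2$ with probability $p^\eta(y_1\succ y_2|x)$, and we assume the marginal over $(x,\{y_1,y_2\})$ is the same under $\mathcal{P}^\eta$ and $\mathcal{P}^*$. Write the noisy loss as a sum over the two orderings, each weighted by its $p^\eta$ probability. Substitute the noisy preference model $p^\eta(y_w\succ y_l|x)=(1-\eta)p^*(y_w\succ y_l|x)+\eta p^*(y_l\succ y_w|x)$ and regroup the terms by $p^*$. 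The factor $p^*(y_1\succ y_2|x)$ then multiplies
\[
(1-\eta)\log\sigma(r_\phi(x,y_1)-r_\phi(x,y_2))+\eta\log\sigma(r_\phi(x,y_2)-r_\phi(x,y_1)),
\]
and the factor $p^*(y_2\succ y_1|x)$ multiplies the same expression with $y_1$ and $y_2$ swapped. Reading these $p^*$ weights back as an expectation over $(y_w,y_l)\sim\mathcal{P}^*$ gives exactly the stated equivalent loss. A simpler way to see this is to view $\mathcal{P}^\eta$ as first drawing from $\mathcal{P}^*$ and then flipping the label with probability $\eta$. Conditioning on the clean draw then yields the weights $(1-\eta)$ and $\eta$ directly, by the tower property.

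\textbf{Step 2: optimal noisy reward.} Assume $r_\phi$ is expressive enough that the margin $\Delta=r_\phi(x,y_1)-r_\phi(x,y_2)$ can be chosen freely for each $(x,\{y_1,y_2\})$. The noisy risk then decouples pointwise into
\[
-\big[q\log\sigma(\Delta)+(1-q)\log\sigma(-\Delta)\big],\qquad q=p^\eta(y_1\succ y_2|x).
\]
This is a binary cross-entropy in $\sigma(\Delta)$, using $1-\sigma(\Delta)=\sigma(-\Delta)$. It is strictly convex in $\Delta$, and its derivative is $\sigma(\Delta)-q$. Setting the derivative to zero gives $\sigma(\Delta^\eta)=q=(1-\eta)p^*(y_w\succ y_l|x)+\eta p^*(y_l\succ y_w|x)$, which is the second claim.

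\textbf{Main obstacle.} The delicate point is the convention $p^*(y_w\succ y_l)=1$. If $\eta=0$ the minimizer diverges, so the optimum must be read as an infimum or limit. For $\eta\in(0,\frac12)$ the target $q=1-\eta$ lies strictly inside $(0,1)$, so a finite minimizer exists. The argument also relies on a capacity assumption on $r_\phi$, namely that the margin is unconstrained pointwise, and on a shared pair marginal between $\mathcal{P}^\eta$ and $\mathcal{P}^*$. I would state both assumptions explicitly.
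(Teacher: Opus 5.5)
Your proposal is correct and follows essentially the same route as the paper. The paper writes the noisy loss as the $(1-\eta)$/$\eta$ mixture of $-\log\hat p$ and $-\log(1-\hat p)$ with $\hat p=\sigma(\Delta)$, and sets the derivative in $\hat p$ to zero to get $\hat p=1-\eta$; you differentiate in $\Delta$ to get $\sigma(\Delta)-q$, which is equivalent because $\sigma$ is a bijection. Your Step 1 change of measure, the explicit capacity and shared-marginal assumptions, and the $\eta=0$ caveat make rigorous what the paper simply asserts.
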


\vspace{-10pt}
\noindent\textbf{Noisy Reward Model Correction.}
To address the issue of noise, following Lemma~\ref{lemm:noisy_reward}, we demonstrate that unbiased reward signals can be derived from their noisy counterparts.

\begin{theorem}
For any $(x, y_w, y_l)\sim \mathcal{P}^*$, if $r_\phi^\eta(x,y)$ is the optimal noisy reward model, then the optimal unbiased reward model $r_\phi^*(x,y)$ satisfies:
\begin{multline}
r_\phi^*(x,y_w)-r_\phi^*(x,y_l)
= \\
\log \left[\frac{\exp\!\big(r_\phi^\eta(x,y_w)-r_\phi^\eta(x,y_l)\big)-a}
{1-a\cdot\exp\big(r_\phi^\eta(x,y_w)-r_\phi^\eta(x,y_l)\big)}\right],
\end{multline}

where $a=\frac{\eta}{1-\eta}$ is a constant.
\label{theo:reward_correction}
\end{theorem}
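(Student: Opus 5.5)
The plan is to combine the characterization of the optimal noisy reward model from Lemma~\ref{lemm:noisy_reward} with the analogous characterization of the optimal unbiased reward model, and then eliminate the unknown true preference probability $p^*(y_w\succ y_l|x)$ between the two relations. Write $\Delta^\eta = r_\phi^\eta(x,y_w)-r_\phi^\eta(x,y_l)$ and $\Delta^* = r_\phi^*(x,y_w)-r_\phi^*(x,y_l)$, and abbreviate $p = p^*(y_w\succ y_l|x)$, so that $p^*(y_l\succ y_w|x)=1-p$.

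The first step is to note that the optimal unbiased reward model is the special case $\eta=0$ of Lemma~\ref{lemm:noisy_reward}; equivalently, it is the minimizer of the clean BT cross-entropy pointwise. Hence $\sigma(\Delta^*) = p$. The lemma itself gives
\begin{equation*}
\sigma(\Delta^\eta) = (1-\eta)p + \eta(1-p) = \eta + (1-2\eta)p.
\end{equation*}
Substituting $p=\sigma(\Delta^*)$ yields $\sigma(\Delta^\eta) = \eta + (1-2\eta)\sigma(\Delta^*)$. Since $\eta<\tfrac12$, this affine map can be inverted:
\begin{equation*}
\sigma(\Delta^*) = \frac{\sigma(\Delta^\eta)-\eta}{1-2\eta}.
\end{equation*}

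The second step is to apply the logit $\sigma^{-1}(q)=\log\frac{q}{1-q}$. This gives
\begin{equation*}
\Delta^* = \log\frac{\sigma(\Delta^\eta)-\eta}{1-\eta-\sigma(\Delta^\eta)}.
\end{equation*}
Next, write $\sigma(\Delta^\eta)=\frac{E}{1+E}$ with $E=\exp(\Delta^\eta)$ and clear denominators. The numerator becomes $\frac{E(1-\eta)-\eta}{1+E}$ and the denominator becomes $\frac{(1-\eta)-\eta E}{1+E}$. Dividing both by $(1-\eta)$ produces $\frac{E-a}{1-aE}$ with $a=\frac{\eta}{1-\eta}$, which is the claimed identity.

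The main obstacle is justifying well-definedness, i.e.\ that the argument of the logarithm is positive, together with the pointwise nature of the optimum. Because $p\in[0,1]$, we have $\sigma(\Delta^\eta)\in[\eta,1-\eta]$, which is exactly $E\in[a,1/a]$, so both $E-a$ and $1-aE$ are nonnegative. Under the paper's convention $p^*=1$ for the observed triple, the identity is understood in the extended sense where $\Delta^*=+\infty$ and the denominator vanishes. For $p$ strictly between $0$ and $1$ the expression is finite. I would state this boundary caveat explicitly and otherwise treat the derivation as pointwise in $(x,y_w,y_l)$, as Lemma~\ref{lemm:noisy_reward} does.
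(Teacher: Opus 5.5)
Your proposal is correct and follows essentially the same route as the paper: substitute $p^*=\sigma(\Delta^*)$ into Lemma~\ref{lemm:noisy_reward} to get $\sigma(\Delta^\eta)=(1-2\eta)\sigma(\Delta^*)+\eta$, then solve for $\exp(\Delta^*)$. The paper cross-multiplies directly instead of inverting the affine map and applying the logit, which is only a cosmetic difference, and your well-definedness remark is a caveat the paper leaves implicit: for $\eta>0$ the convention $p^*=1$ makes $1-a\exp(\Delta^\eta)$ vanish, so the identity holds only in the extended sense $\Delta^*=+\infty$.
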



It is evident that the unbiased reward signal $r_\phi^*(x, y)$ can be recovered from its noisy counterpart $r_\phi^\eta(x, y)$. For instance, in the case where the noisy reward model is unbiased ($a=0$), the relationship reduces to $r_\phi^*(x, y_w) - r_\phi^*(x, y_l) = r_\phi^\eta(x, y_w) - r_\phi^\eta(x, y_l)$, signifying an absence of confounding. Conversely, when the noisy model is completely inverted relative to true preferences (as $a \to \infty$), we obtain $r_\phi^*(x, y_w) - r_\phi^*(x, y_l) = r_\phi^\eta(x, y_l) - r_\phi^\eta(x, y_w)$, representing a state of complete confounding.

\noindent\textbf{Unbiased Reward Model Loss.}
However, during the reinforcement learning phase, we utilize only the prompts $x$ from the dataset. Since the static preference pairs $(y_w, y_l)$ are absent during this stage, we cannot directly apply Theorem~\ref{theo:reward_correction} to recover the unbiased reward signal $r_\phi^*(x, y)$ for policy guidance. To address this, we require a robust loss function that trains the unbiased reward model $r_\phi^*$ directly from the noisy preference dataset. By leveraging Theorem~\ref{theo:reward_correction}, we can establish a mathematical relationship between $r^\eta$ and $r^*$. Given that the normal RM loss $\mathcal{L}_\text{RM}$ optimizes the noisy reward model $r^\eta$, substituting $r^\eta$ with its representation in terms of the unbiased $r^*$ allows us to optimize $r^*$ directly. The unbiased reward model loss is derived as follows:

\begin{corollary}
Unbiased Reward Model (URM) loss:
\begin{equation}
\mathcal{L}_\text{URM}=-\log \frac{\exp\!\big(r_\phi(x,y_w)-r_\phi(x,y_l)\big)+a}
{\exp\!\big(r_\phi(x,y_w)-r_\phi(x,y_l)\big)+1}.
\end{equation}
Using the URM loss, we can directly train the unbiased reward model $r_\phi^*$ from the noisy preference $\mathcal{P}^\eta$.
\label{coro:urm_loss}
\end{corollary}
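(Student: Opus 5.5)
The plan is to invert the correction of Theorem~\ref{theo:reward_correction}. That is, we express the noisy reward margin as a function of the unbiased one and substitute it into the standard RM loss. Write $d^*=r_\phi^*(x,y_w)-r_\phi^*(x,y_l)$ and $d^\eta=r_\phi^\eta(x,y_w)-r_\phi^\eta(x,y_l)$. Theorem~\ref{theo:reward_correction} gives $e^{d^*}=(e^{d^\eta}-a)/(1-a e^{d^\eta})$. Solving this linear-fractional relation for $e^{d^\eta}$ yields
\begin{equation*}
e^{d^\eta}=\frac{e^{d^*}+a}{1+a e^{d^*}}.
\end{equation*}
The same identity follows directly from Lemma~\ref{lemm:noisy_reward}: $\sigma(d^\eta)=(1-\eta)\sigma(d^*)+\eta\,\sigma(-d^*)$. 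I would verify it both ways as a consistency check.

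Next, we plug this into $\mathcal{L}_\text{RM}=-\log\sigma(d^\eta)=-\log\frac{e^{d^\eta}}{1+e^{d^\eta}}$. A direct computation gives
\begin{equation*}
1+e^{d^\eta}=\frac{(1+a)(1+e^{d^*})}{1+a e^{d^*}},
\qquad
\sigma(d^\eta)=\frac{e^{d^*}+a}{(1+a)(e^{d^*}+1)}.
\end{equation*}
Therefore $-\log\sigma(d^\eta)=-\log\frac{e^{d^*}+a}{e^{d^*}+1}+\log(1+a)$. The additive term $\log(1+a)$ is a constant independent of $\phi$. Dropping it and renaming the parametrized margin $r_\phi(x,y_w)-r_\phi(x,y_l)$ as the variable being optimized gives exactly $\mathcal{L}_\text{URM}$.

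Finally, we justify that minimizing $\mathcal{L}_\text{URM}$ in expectation over $\mathcal{P}^\eta$ recovers $r_\phi^*$. Because the map $d^*\mapsto d^\eta$ above is a strictly increasing bijection of $\mathbb{R}$ onto $\mathbb{R}$ whenever $a\in[0,1)$ (i.e.\ $\eta<\tfrac12$), optimizing over $d^*$ is a reparametrization of optimizing over $d^\eta$. By Lemma~\ref{lemm:noisy_reward}, the noisy RM risk is minimized at $\sigma(d^\eta)=p^\eta$. The minimizer of the URM risk is therefore the preimage of that optimum, and by Theorem~\ref{theo:reward_correction} this preimage is $d^*$, the optimal unbiased margin.

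The main obstacle is this last step. It needs care about the range of the map: $e^{d^\eta}$ lies in $(a,1/a)$, so under the reparametrization the noisy margin cannot take arbitrary real values. One must check that the noisy optimum lies inside this range, and it does, since $p^\eta\in[\eta,1-\eta]$. This is what makes the argument hold at the level of pointwise conditional risk minimization.
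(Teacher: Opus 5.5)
Your algebraic derivation is exactly the paper's proof. You invert Theorem~\ref{theo:reward_correction} to get $e^{d^\eta}=(e^{d^*}+a)/(1+ae^{d^*})$, substitute into $-\log\sigma(d^\eta)$, use $1+e^{d^\eta}=(1+a)(1+e^{d^*})/(1+ae^{d^*})$, and discard $\log(1+a)$. The computation and the cross-check against Lemma~\ref{lemm:noisy_reward} are correct. The paper stops there and only asserts that this substitution lets one optimize $r_\phi^*$ directly. Your last step is therefore extra rigor, but it fails exactly where you locate the difficulty.

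\textbf{The range of the reparametrization.} For $a>0$ the map $g:d^*\mapsto d^\eta$ is a strictly increasing bijection of $\mathbb{R}$ onto $(\log a,-\log a)$, not onto $\mathbb{R}$; your final paragraph contradicts your earlier sentence on this.

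\textbf{The noisy optimum is not interior.} In the paper's setting the noisy optimum is \emph{not} inside this open interval. With $p^*(y_w\succ y_l|x)=1$ one has $p^\eta=1-\eta$, hence $e^{d^\eta}=(1-\eta)/\eta=1/a$, which is the excluded endpoint. The closed bound $p^\eta\in[\eta,1-\eta]$ does not give membership in the open interval $(a,1/a)$. Correspondingly, the denominator $1-ae^{d^\eta}$ in Theorem~\ref{theo:reward_correction} vanishes, the `preimage' is $d^*=+\infty$, and the URM risk has no finite minimizer. Your interior argument is valid only for soft preferences $p^*\in(0,1)$. There $p^\eta=\eta+(1-2\eta)p^*\in(\eta,1-\eta)$ and the preimage is indeed $\log\frac{p^*}{1-p^*}$.

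\textbf{Repair by monotonicity.} First note that $g$ is odd, since $e^{g(-d)}=1/e^{g(d)}$. You need this anyway: flipped pairs evaluate the loss at $-d^*$, and oddness makes them correspond to $-d^\eta$. The conditional URM risk is then $C(g(d^*))-\log(1+a)$, with $C(t)=-p^\eta\log\sigma(t)-(1-p^\eta)\log\sigma(-t)$. Since $C'(t)=\sigma(t)-p^\eta$, $C$ is strictly convex with minimizer $\log\frac{p^\eta}{1-p^\eta}$. When $p^*=1$ this minimizer is the right endpoint $-\log a$ of $g(\mathbb{R})$. Hence $C\circ g$ is strictly decreasing, and its infimum is approached only as $d^*\to+\infty$, i.e.\ $\sigma(d^*)\to 1=p^*$. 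This agrees with the paper's downward-compatibility corollary, whose gradient-sign argument also goes through for $\eta^*=\hat\eta$ and shows the optimum is reached only as $\Delta\to+\infty$.
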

Mathematically, $a$ is defined as $\frac{\eta}{1-\eta}$. However, because the true value of $\eta$ is unobservable in practice, we treat $a$ as a tunable hyperparameter during training. By adjusting the parameter $a$, we can directly train an unbiased reward model.
Then, during the second stage of RL training, the interference from the noisy preference will no longer occur. Our method is orthogonal to specific RL algorithms and can be combined with any RL algorithm, such as PPO \cite{ppo} and GRPO \cite{grpo}. The specific RL algorithms are not the focus of this paper.

\subsection{Unbiased Direct Preference Optimization}

DPO directly supervises policy model training using preference data, so noisy preferences can significantly impact model performance. Similar to the discussion of unbiased reward model learning, we first obtain the optimal solution for normal DPO in the presence of noisy preferences.

\begin{lemma}
Under noisy preferences, the normal DPO loss $-\mathbb{E}_{(x, y_w, y_l) \sim \mathcal{P^\eta}} 
\log \sigma \left( \beta \log \frac{\pi_\theta(y_w|x)\pi_{\text{ref}}(y_l|x)}{\pi_\theta(y_l|x)\pi_{\text{ref}}(y_w |x)} \right)$  is equivalent to the following noisy optimization loss:
\begin{equation}
\mathbb{E}_{(x,y_w,y_l)\sim \mathcal{P^*}}\log \frac{\sigma\left( \beta\log\frac{\pi_\theta(y_w|x)\pi_{ref}(y_l|x)}{\pi_\theta(y_l|x)\pi_{ref}(y_w|x)} \right)^{\eta-1}}{\sigma\left(\beta\log\frac{\pi_\theta(y_l|x)\pi_{ref}(y_w|x)}{\pi_\theta(y_w|x)\pi_{ref}(y_l|x)}\right)^{\eta}}.
\end{equation}
The corresponding optimal noisy policy model satisfies:
\begin{multline}
\sigma\left( \beta\log\frac{\pi_\theta^\eta(y_w|x)\pi_{ref}(y_l|x)}{\pi_\theta^\eta(y_l|x)\pi_{ref}(y_w|x)} \right)=\\
(1-\eta)\,p^*(y_w\succ y_l|x)
+\eta\,p^*(y_l\succ y_w|x).
\end{multline}
\label{lemm:noisy_dpo}
\end{lemma}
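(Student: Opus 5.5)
The plan is to reduce Lemma~\ref{lemm:noisy_dpo} to Lemma~\ref{lemm:noisy_reward} through the DPO reparameterization, and to confirm the two claims directly along the way. Write
\[
h_\theta(x,y_w,y_l)=\beta\log\frac{\pi_\theta(y_w|x)\pi_{\text{ref}}(y_l|x)}{\pi_\theta(y_l|x)\pi_{\text{ref}}(y_w|x)} .
\]
This is exactly $r_\theta(x,y_w)-r_\theta(x,y_l)$ for the implicit reward $r_\theta(x,y)=\beta\log\frac{\pi_\theta(y|x)}{\pi_{\text{ref}}(y|x)}$. Swapping $y_w$ and $y_l$ flips its sign, so $\sigma(-h_\theta)=1-\sigma(h_\theta)$.

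\textbf{Equivalence of the losses.} The noisy preference model says a pair labelled $y_w\succ y_l$ under $\mathcal{P}^\eta$ arises from a clean pair under $\mathcal{P}^*$ in one of two ways. With probability $1-\eta$ the label is kept. With probability $\eta$ it is flipped, and the observed ``winner'' is the true loser. So the noisy expectation $-\mathbb{E}_{\mathcal{P}^\eta}\log\sigma(h_\theta)$ splits as
\[
-\mathbb{E}_{\mathcal{P}^*}\bigl[(1-\eta)\log\sigma(h_\theta(x,y_w,y_l))+\eta\log\sigma(h_\theta(x,y_l,y_w))\bigr].
\]
Combining the logs gives $\log\bigl[\sigma(h_\theta)^{\eta-1}/\sigma(-h_\theta)^{\eta}\bigr]$, which is the displayed expression. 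Equivalently, this is Lemma~\ref{lemm:noisy_reward} with $r_\phi$ replaced by $r_\theta$.

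\textbf{Optimality condition.} I work pointwise in $(x,y_w,y_l)$, treating $\pi_\theta$ as unrestricted. This is justified because the implicit reward ranges over all functions of $(x,y)$ up to the partition-function shift, and that shift cancels in differences. I also average over both orderings of the pair, so that $p^*(y_w\succ y_l|x)$ and $p^*(y_l\succ y_w|x)$ both enter as weights. The conditional risk, as a function of $s=\sigma(h)\in(0,1)$, is then
\[
-q\log s-(1-q)\log(1-s),
\]
where $q=(1-\eta)p^*(y_w\succ y_l|x)+\eta p^*(y_l\succ y_w|x)$ is the noisy preference probability. This is a strictly convex cross-entropy minimized uniquely at $s=q$, i.e.\ $\sigma(h_{\theta^\eta})=q$, which is the stated condition. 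Alternatively, I can invoke the optimal-reward characterization from Lemma~\ref{lemm:noisy_reward} and substitute $r=\beta\log(\pi/\pi_{\text{ref}})+\beta\log Z$.

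\textbf{Main obstacle.} The delicate step is the one that requires the implicit reward to be rich enough to attain the pointwise optimum, i.e.\ that there is a policy whose $h_\theta$ equals $\sigma^{-1}(q)$ for every pair. I would handle it as DPO does: first construct the reward $r$ with the right pairwise differences, then define $\pi(y|x)\propto\pi_{\text{ref}}(y|x)\exp(r(x,y)/\beta)$. When $p^*\in\{0,1\}$ and $\eta\in(0,\tfrac12)$, $q$ lies strictly inside $(0,1)$, so $\sigma^{-1}(q)$ is finite and no degeneracy arises.
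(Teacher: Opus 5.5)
Your argument for both claims is the paper's. You split $\mathbb{E}_{\mathcal{P}^\eta}$ into a kept term with weight $1-\eta$ and a flipped term with weight $\eta$, then minimize pointwise in $s=\sigma(h_\theta)$. The paper does the second step by setting $\partial\mathcal{L}/\partial\hat p=0$ after plugging in $p^*(y_w\succ y_l|x)=1$, which gives $\hat p=1-\eta$. Your averaging over both orderings gives the general-$p^*$ version with target $q$. Strict convexity of the cross-entropy also gives you uniqueness, which the paper does not state.

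The step where you go beyond the paper, showing the pointwise optimum is attainable, does not work as written. Being able to realize every implicit reward $r(x,\cdot)$ only lets you realize margins of the form $h(x,y_1,y_2)=r(x,y_1)-r(x,y_2)$. These must sum to zero around every cycle of compared responses, and the targets $\sigma^{-1}(q)$ need not. Under the paper's own assumption $p^*(y_w\succ y_l|x)=1$, every observed pair has target $c=\log\frac{1-\eta}{\eta}>0$, oriented from true winner to true loser. Suppose a prompt has $y_1\succ y_2\succ y_3$ with all three pairs in the support. Then you would need $r(x,y_1)-r(x,y_3)$ to equal both $2c$ and $c$, which is impossible. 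So ``first construct the reward $r$ with the right pairwise differences'' fails in general. Likewise, the fact that the implicit reward ranges over all functions of $(x,y)$ does not license optimizing each pair's margin independently.

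The construction works only when each prompt's comparison graph is acyclic, for instance one pair per prompt. Otherwise the optimality statement must be read per pair, treating each $\sigma(h_\theta)$ as a free variable. That is exactly the idealization the paper's proof makes silently. So your proof is as valid as the paper's, but you should either add the acyclicity restriction or drop the realizability claim and state the pointwise reading explicitly.
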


\vspace{-10pt}
\noindent\textbf{Noisy DPO Model Correction.}
Based on Lemma~\ref{lemm:noisy_dpo}, the noisy policy model $\pi_\theta^\eta$ obtained through DPO under noisy preferences can be corrected in the following:

\begin{theorem}
For any $(x, y_w, y_l)\sim \mathcal{P}^*$, if $\pi_\theta^\eta(y|x)$ is the optimal noisy policy model, then the optimal unbiased policy model $\pi_\theta^*(y|x)$ satisfies:
\begin{equation}
\left(
\frac{\pi_\theta^*(y_w|x)\,\pi_{\text{ref}}(y_l| x)}
{\pi_\theta^*(y_l| x)\,\pi_{\text{ref}}(y_w| x)}
\right)^{\beta}
=
\frac{
\left(
\frac{\pi_\theta^\eta(y_w| x)\,\pi_{\text{ref}}(y_l| x)}
{\pi_\theta^\eta(y_l|x)\,\pi_{\text{ref}}(y_w|x)}
\right)^{\beta}
-a}{
1-a\cdot\left(
\frac{\pi_\theta^\eta(y_w| x)\,\pi_{\text{ref}}(y_l|x)}
{\pi_\theta^\eta(y_l|x)\,\pi_{\text{ref}}(y_w|x)}
\right)^{\beta}
}.
\end{equation}

where $a=\frac{\eta}{1-\eta}$ is a constant.
\label{theo:dpo_correction}
\end{theorem}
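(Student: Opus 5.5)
The plan mirrors the proof of Theorem~\ref{theo:reward_correction}, via the DPO reparametrization. Define the implicit noisy and unbiased margins
\[
u^\eta = \beta\log\frac{\pi_\theta^\eta(y_w|x)\pi_{\text{ref}}(y_l|x)}{\pi_\theta^\eta(y_l|x)\pi_{\text{ref}}(y_w|x)},\qquad
u^* = \beta\log\frac{\pi_\theta^*(y_w|x)\pi_{\text{ref}}(y_l|x)}{\pi_\theta^*(y_l|x)\pi_{\text{ref}}(y_w|x)},
\]
so that $\exp(u^\eta)$ and $\exp(u^*)$ are exactly the $\beta$-powered ratios appearing in the statement. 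Everything then reduces to a scalar identity between $u^\eta$ and $u^*$.

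First, I use Lemma~\ref{lemm:noisy_dpo}, which gives $\sigma(u^\eta) = (1-\eta)p^* + \eta(1-p^*)$, where $p^* = p^*(y_w\succ y_l|x)$ and $p^*(y_l\succ y_w|x) = 1-p^*$. Next, I characterize the unbiased optimum. Applying the same lemma with $\eta=0$, i.e.\ ordinary DPO under $\mathcal{P}^*$, the minimizer satisfies $\sigma(u^*) = p^*$. Pointwise, this holds because the per-pair cross-entropy $-p\log\sigma(u) - (1-p)\log\sigma(-u)$ is minimized at $\sigma(u) = p$, assuming the policy class is rich enough to realize any margin.

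Substituting gives $\sigma(u^\eta) = \eta + (1-2\eta)\sigma(u^*)$. I then solve for $\sigma(u^*)$ and convert to exponentials. Write $s = \sigma(u^\eta)$ and $E = e^{u^\eta}$, so $s = E/(1+E)$. Then
\[
\sigma(u^*) = \frac{s-\eta}{1-2\eta}, \qquad 1-\sigma(u^*) = \frac{1-\eta-s}{1-2\eta},
\]
and therefore
\[
e^{u^*} = \frac{\sigma(u^*)}{1-\sigma(u^*)} = \frac{s-\eta}{1-\eta-s}.
\]
Multiplying numerator and denominator by $(1+E)$ gives $\frac{E(1-\eta)-\eta}{(1-\eta)-\eta E}$. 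Dividing by $1-\eta$ yields $\frac{E-a}{1-aE}$ with $a = \eta/(1-\eta)$, which is the claimed formula.

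The main obstacle is making the identification of optima rigorous rather than the algebra. One issue is that the statement assumes $p^*=1$ in the preliminaries; then $\sigma(u^*)=1$ is not attained, and $u^*$ is infinite. I would handle this by working with general $p^*\in(0,1)$ in the BT model, as the statement's form suggests, with the degenerate case obtained as a limit. A second issue is that the right-hand side must be positive, which requires $\eta < s < 1-\eta$. This holds automatically since $s$ is a convex combination of $1-\eta$ and $\eta$ with weights $p^*, 1-p^*$, and $\eta<\tfrac12$. I would also note briefly that $Z(x)$ cancels in the margins, so the pointwise optimality transfers from margins to the policies themselves.
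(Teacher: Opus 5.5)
Your proposal is correct and follows essentially the paper's route: the paper likewise substitutes the unbiased optimum $\sigma(u^*)=p^*$ into Lemma~\ref{lemm:noisy_dpo} to obtain $\sigma(u^\eta)=(1-\eta)\sigma(u^*)+\eta\,\sigma(-u^*)$, then solves for $U^*=e^{u^*}$ by cross-multiplying in terms of $U^\eta=e^{u^\eta}$, which is the same linear algebra as your passage through the odds $\sigma(u^*)/(1-\sigma(u^*))$. Your remarks on the degenerate case $p^*=1$ (where the denominator $1-aU^\eta$ vanishes and both sides are infinite) and on the positivity of the right-hand side add care that the paper's proof leaves implicit.
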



Consequently, the unbiased policy $\pi_\theta^*(y | x)$ can be recovered from the noisy policy $\pi_\theta^\eta(y | x)$. Notably, when $a=0$, the policy ratios coincide:
$
\tfrac{\pi_\theta^*(y_w | x)\,\pi_{\text{ref}}(y_l | x)}
     {\pi_\theta^*(y_l | x)\,\pi_{\text{ref}}(y_w | x)}
=
\tfrac{\pi_\theta^\eta(y_w | x)\,\pi_{\text{ref}}(y_l | x)}
     {\pi_\theta^\eta(y_l | x)\,\pi_{\text{ref}}(y_w | x)},
$
indicating that the noisy policy is unconfounded. In contrast, as $a \to \infty$, the relationship flips:
$
\tfrac{\pi_\theta^*(y_w | x)\,\pi_{\text{ref}}(y_l | x)}
     {\pi_\theta^*(y_l | x)\,\pi_{\text{ref}}(y_w | x)}
=
\tfrac{\pi_\theta^\eta(y_l | x)\,\pi_{\text{ref}}(y_w | x)}
     {\pi_\theta^\eta(y_w | x)\,\pi_{\text{ref}}(y_l | x)},
$
corresponding to complete confounding, where the noisy policy effectively inverts the true human preferences.

\noindent\textbf{Unbiased DPO Loss.}
In practical training, ground-truth labels are often unavailable, so we cannot determine which response is truly preferred in a noisy dataset. As a result, Theorem~\ref{theo:dpo_correction} cannot be applied directly to policy optimization. Motivated by Theorem~\ref{theo:dpo_correction}, we instead establish a relationship between the noisy policy \(\pi_\theta^\eta\) and the unbiased policy \(\pi_\theta^*\). Specifically, we can express \(\pi_\theta^\eta\) in terms of \(\pi_\theta^*\) (the desired unbiased model) and substitute this equivalent into the objective. This yields an unbiased loss function for optimizing \(\pi_\theta^*\) under noisy preferences as follows:

\begin{corollary} Unbiased Direct Preference Optimization (UDPO) loss:
\begin{equation}
\mathcal{L}_\text{UDPO}=-\log \frac{
\left(\dfrac{\pi_\theta(y_w|x)\pi_{\text{ref}}(y_l|x)}
{\pi_\theta(y_l|x)\pi_{\text{ref}}(y_w|x)}\right)^{\beta}
+a}{
\left(\dfrac{\pi_\theta(y_w|x)\pi_{\text{ref}}(y_l|x)}
{\pi_\theta(y_l|x)\pi_{\text{ref}}(y_w|x)}\right)^{\!\beta}
+1}.
\end{equation}
Using the UDPO loss, we can directly train the unbiased policy model $\pi_\theta^*$ from
the noisy preference $\mathcal{P}^\eta$.
\label{coro:udpo_loss}
\end{corollary}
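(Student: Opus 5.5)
We mirror the derivation of Corollary~\ref{coro:urm_loss}, with the reward difference replaced by the DPO implicit reward. Write the policy odds as
\[
u_\theta=\Big(\tfrac{\pi_\theta(y_w|x)\pi_{\text{ref}}(y_l|x)}{\pi_\theta(y_l|x)\pi_{\text{ref}}(y_w|x)}\Big)^{\beta},
\]
so that $\sigma(\beta\log(\cdot))=u_\theta/(1+u_\theta)$ and the normal DPO loss equals $-\log\frac{u_\theta}{1+u_\theta}$. The plan has three steps. First, invert the correction map. Second, substitute the inverse into the DPO loss. Third, verify that the resulting objective has the unbiased policy $\pi_\theta^*$ as its population minimizer under $\mathcal{P}^\eta$.

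\emph{Step 1 (inversion).} Theorem~\ref{theo:dpo_correction} gives $u^*=(u^\eta-a)/(1-a u^\eta)$. For $a\in[0,1)$, i.e.\ $\eta<\tfrac12$, we solve this for $u^\eta$:
\[
u^*(1-a u^\eta)=u^\eta-a \;\Longrightarrow\; u^\eta=\frac{u^*+a}{1+a u^*}.
\]
This map is a bijection from $(0,\infty)$ onto $(a,1/a)$, which is exactly the range allowed by Lemma~\ref{lemm:noisy_dpo}, since $\sigma$ of the noisy implicit reward lies in $[\eta,1-\eta]$.

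\emph{Step 2 (substitution).} Parametrize the noisy-optimal quantity through the model being trained: set $u^\eta=(u_\theta+a)/(1+a u_\theta)$ in $u^\eta/(1+u^\eta)$. A short calculation gives
\[
\frac{u^\eta}{1+u^\eta}=\frac{u_\theta+a}{(1+a)(u_\theta+1)}.
\]
The normal DPO loss applied to this reparametrized model is therefore $\mathcal{L}_\text{UDPO}+\log(1+a)$. The additive constant does not affect optimization, so we drop it and recover exactly the stated $\mathcal{L}_\text{UDPO}$.

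\emph{Step 3 (minimizer).} This step justifies the claim ``directly train $\pi_\theta^*$''. Write the population UDPO risk under $\mathcal{P}^\eta$ pairwise, as in Lemma~\ref{lemm:noisy_dpo}. Let $q$ be the noisy probability that $y_w\succ y_l$. Swapping $y_w$ and $y_l$ sends $u_\theta\mapsto 1/u_\theta$. The quantity to minimize over $u_\theta$ is then
\[
-q\log\frac{u_\theta+a}{(1+a)(u_\theta+1)}-(1-q)\log\frac{1+a u_\theta}{(1+a)(u_\theta+1)}.
\]
This is a binary cross-entropy in the variable $s=\frac{u_\theta+a}{(1+a)(u_\theta+1)}$, and $s$ is strictly monotone in $u_\theta$. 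Hence the minimizer satisfies $s=q=(1-\eta)p^*+\eta(1-p^*)$. Solving gives $u_\theta/(1+u_\theta)=p^*$, using $a=\eta/(1-\eta)$. By the DPO closed form~\eqref{eq:dpo_reward}, this is precisely the optimal unbiased policy $\pi_\theta^*$.

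The main obstacle is this last identification. One must confirm that $s$ is a monotone bijection onto $(\eta,1-\eta)$, so the cross-entropy minimizer is attained and is unique in $u_\theta$. One must also track that the noisy target $q$ always lies in this range, which holds because $\eta<\tfrac12$. Degenerate values $p^*\in\{0,1\}$ correspond to $u_\theta\to 0$ or $u_\theta\to\infty$ and should be handled as limits.
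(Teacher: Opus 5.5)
Your Steps 1 and 2 are exactly the paper's proof. The paper inverts Theorem~\ref{theo:dpo_correction} to get $U^\eta=(U^*+a)/(1+aU^*)$, substitutes this into $-\log\sigma(\log U^\eta)$, simplifies to $-\log\frac{U^*+a}{U^*+1}+\log(1+a)$, and drops the constant.

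Your Step 3 goes beyond the paper, which takes the clause ``we can directly train $\pi_\theta^*$'' as following from the substitution alone. You instead verify it. Since $1-s(u)=s(1/u)$, the noisy population risk is a binary cross-entropy in the strictly increasing map $s:(0,\infty)\to(\eta,1-\eta)$. Since $s(u)=\eta+(1-2\eta)\frac{u}{1+u}$, its minimizer is exactly $\frac{u_\theta}{1+u_\theta}=p^*$.

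This check is correct and covers general $p^*\in(0,1)$, with $p^*\in\{0,1\}$ handled as limits. It is also a cleaner, general-$p^*$ version of the $p^*=1$ monotonicity argument that the paper gives only later, in its downward-compatibility proof.
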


By adjusting the parameter $a$, we can obtain an unbiased policy model using 
supervised preference training.

\subsection{Further Theoretical Analysis}

In this subsection, we further analyze the properties of the proposed URM and UDPO losses.

\noindent\textbf{A Unified Framework.}
Although the URM and UDPO losses are derived independently from the normal RM and DPO losses, respectively, our resulting formulations show that they still preserve the fundamental transformation between explicit and implicit reward models in DPO and RLHF \cite{dpo}: $r_\phi(x, y_w) - r_\phi(x, y_l) = \beta \log \frac{\pi_\theta(y_w|x)}{\pi_{\text{ref}}(y_w |x)} - \beta \log \frac{\pi_\theta(y_l|x)}{\pi_{\text{ref}}(y_l|x)}$.
For notational simplicity, we define the reward margin as $\Delta= r_\phi(x, y_w) - r_\phi(x, y_l)= \beta \log \frac{\pi_\theta(y_w|x)}{\pi_{\text{ref}}(y_w |x)} - \beta \log \frac{\pi_\theta(y_l|x)}{\pi_{\text{ref}}(y_l|x)}$.
This shared representation enables us to analyze both the URM and UDPO losses within a unified framework by using the unbiased loss:
\begin{equation}
\mathcal{L}_{\text{unbiased}}(\Delta) = -\log \frac{\exp(\Delta) + a}{\exp(\Delta) + 1}.    
\end{equation}

\noindent\textbf{Gradient Analysis.}
For the normal RM loss $\mathcal{L}_\text{RM} = -
\log \sigma \big( r_\phi(x, y_w) - r_\phi(x, y_l) \big)$ and the normal DPO loss $\mathcal{L}_{\text{DPO}}=- 
\log \sigma \left( \beta \log \frac{\pi_\theta(y_w|x)}{\pi_{\text{ref}}(y_w |x)} - \beta \log \frac{\pi_\theta(y_l|x)}{\pi_{\text{ref}}(y_l|x)} \right)$, we can also simplify them to
$\mathcal{L}(\Delta) = -\log \sigma(\Delta)$. 
We have the gradient as follows: 
\begin{equation}
\frac{\partial \mathcal{L}(\Delta)}{\partial\Delta}= \sigma(\Delta) - 1.    
\end{equation}

The gradient magnitude \( \left|\frac{\partial \mathcal{L}(\Delta)}{\partial \Delta}\right| \) approaches \(0\) as \(\Delta \gg 0\) and approaches \(1\) as \(\Delta \ll 0\). This means that samples that are ``highly misranked" (i.e., with large negative \(\Delta\)) receive the largest optimization weight. However, noisy or mislabeled pairs often fall into this regime \cite{lc}, making the model prone to overfitting spurious or inaccurate preferences.

For the URM loss and the UDPO loss $\mathcal{L}_{\text{unbiased}}(\Delta) = -\log \frac{\exp(\Delta) + a}{\exp(\Delta) + 1}$, we have the gradient as follows:
\begin{equation}
\frac{\partial\mathcal{L}_{\text{unbiased}}(\Delta)}{\partial\Delta} = \frac{\exp(\Delta)}{\exp(\Delta) + 1} - \frac{\exp(\Delta)}{\exp(\Delta) + a}    
\end{equation}

Notably, the gradient magnitude $|\frac{\partial\mathcal{L}_{\text{unbiased}}(\Delta)}{\partial\Delta}|$ vanishes as $\Delta \ll 0$, allowing the optimization process to effectively avoid fitting to noisy samples. Similarly, as $\Delta \gg 0$, the gradient approaches $0$ in a manner consistent with the normal loss function. The gradient reaches its maximum value of $\frac{1-\sqrt{a}}{1+\sqrt{a}}$ at the intermediate point $\Delta = \frac{1}{2} \log a$. This indicates that, during optimization, the unbiased loss primarily focuses on pairs that are difficult to distinguish.

\begin{figure*}[t]
    \centering
    \begin{subfigure}[t]{0.24\textwidth}
        \centering
        \includegraphics[width=\textwidth]{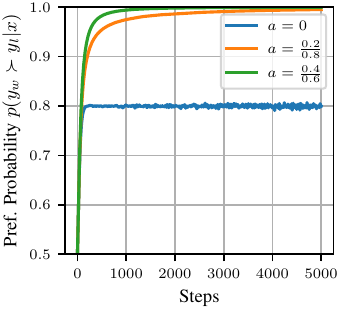}
        \caption{URM with $\eta=0.2$}
    \end{subfigure}
    \hfill 
    \begin{subfigure}[t]{0.24\textwidth}
        \centering
        \includegraphics[width=\textwidth]{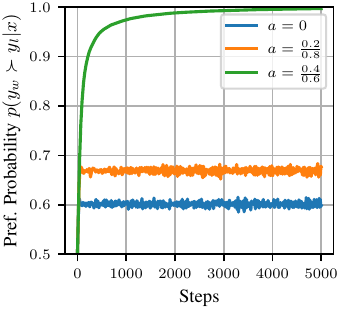}
        \caption{URM with $\eta=0.4$}
    \end{subfigure}
    \hfill
    \begin{subfigure}[t]{0.24\textwidth}
        \centering
        \includegraphics[width=\textwidth]{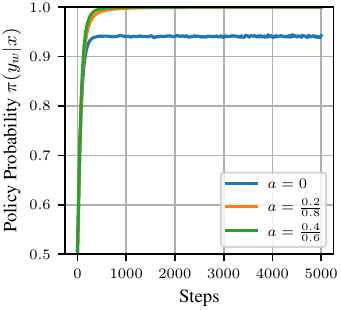}
        \caption{UDPO with $\eta=0.2$}
    \end{subfigure}
    \hfill
    \begin{subfigure}[t]{0.24\textwidth}
        \centering
        \includegraphics[width=\textwidth]{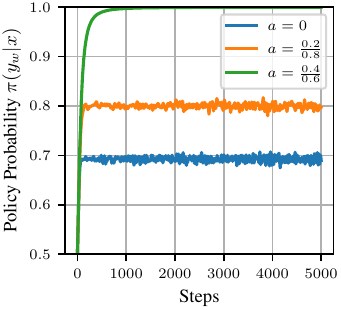}
        \caption{UDPO with $\eta=0.4$}
    \end{subfigure}
    \caption{(a) \& (b): Trained preference probability $p(y_w \succ y_l|x)$ for reward model using URM loss with $\eta\in\{0.2, 0.4\}$. (c) \& (d): Trained policy probability $\pi(y_w|x)$ for policy model using UDPO loss with  $\beta=0.5$ and $\eta\in\{0.2, 0.4\}$.}
    \label{fig:theoretical_experiment}
\end{figure*}

\noindent\textbf{Parameter downward Compatibility.}
In practical training, we set \(a=\frac{\hat{\eta}}{1-\hat{\eta}}\), with \(0 \le a < 1\). Our derivations show that the proposed loss is inherently robust when the true noise rate \(\eta^*\) matches the estimate \(\hat{\eta}\). Moreover, we prove that the loss remains an unbiased optimal solution when the data are cleaner than assumed, i.e., when \(\eta^* \le \hat{\eta}\).

\begin{corollary}
Let $a=\frac{\hat{\eta}}{1-\hat{\eta}}$, where $0 \le a < 1$. If the true noise rate satisfies $0 \le \eta^* \le \hat{\eta}$, then the unbiased loss function $\mathcal{L}_{\text{unbiased}}$ attains an unbiased optimal solution.
\label{coro:downward _compatibility}
\end{corollary}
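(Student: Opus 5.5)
We would prove this by computing the population minimizer of $\mathcal{L}_{\text{unbiased}}$ under the true noisy distribution and checking its sign and ordering. By the noisy preference model with true rate $\eta^*$, a clean pair with $p^*(y_w\succ y_l|x)=1$ appears with the order $(y_w,y_l)$ with probability $1-\eta^*$ and reversed with probability $\eta^*$. Writing $\Delta$ for the margin of the truly preferred response (the reversed pair has margin $-\Delta$), the conditional risk is
\begin{equation*}
R(\Delta) = -(1-\eta^*)\log\frac{e^{\Delta}+a}{e^{\Delta}+1} - \eta^*\log\frac{e^{-\Delta}+a}{e^{-\Delta}+1}.
\end{equation*}
Using $\frac{e^{-\Delta}+a}{e^{-\Delta}+1} = \frac{1+ae^{\Delta}}{1+e^{\Delta}}$ and setting $t=e^{\Delta}>0$, we get $R = -(1-\eta^*)\log(t+a) - \eta^*\log(1+at) + \log(1+t)$ up to constants.

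The first step is to differentiate in $t$ and set the derivative to zero:
\begin{equation*}
\frac{1-\eta^*}{t+a} + \frac{\eta^* a}{1+at} = \frac{1}{1+t}.
\end{equation*}
Clearing denominators gives a quadratic in $t$. When $\eta^*=\hat\eta$, it should reproduce the correction of \cref{theo:reward_correction}, and $t\to\infty$, meaning $\sigma(\Delta)\to 1=p^*$. This is the unbiased case.

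For general $\eta^*\le\hat\eta$, we would show the minimizer still sends $\Delta\to+\infty$ (or at least satisfies $\Delta>0$ with the correct ranking). The plan is to show that $R'(\Delta)<0$ for all $\Delta$. Equivalently, the left side above exceeds the right side for every $t>0$, so $R$ is monotonically decreasing and its infimum is approached as $\Delta\to\infty$. That limit recovers the clean preference $p^*=1$.

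Concretely, the quantity $(1-\eta^*)(1+t)(1+at) + \eta^* a(1+t)(t+a) - (t+a)(1+at)$ simplifies, after expansion, to something proportional to
\begin{equation*}
(1-a)\bigl[(1-\eta^*)-a\,\eta^*\bigr]\,t + \dots
\end{equation*}
Its positivity should reduce to $a\le \frac{1-\eta^*}{\eta^*}$, or to a condition like $\eta^*(1+a)\le 1$. Both hold whenever $\eta^*\le\hat\eta<\tfrac12$ and $a<1$.

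The main obstacle is this polynomial sign check. The expansion must be done carefully to confirm that all coefficients are nonnegative exactly under $\eta^*\le\hat\eta$. We expect the constant and quadratic coefficients to reduce to multiples of $a(1-a)$ and $(1-a)$ times $(1-\eta^*(1+a))$, or $(\hat\eta-\eta^*)$-type factors.

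If direct expansion is messy, the fallback is to write $R$ as a convex combination of the risk at $\eta^*=0$ and the risk at $\eta^*=\hat\eta$. Both of these are decreasing in $\Delta$: the first trivially, since $\frac{t+a}{t+1}$ is increasing for $a<1$, and the second by \cref{coro:urm_loss} and \cref{coro:udpo_loss}. Because $R$ is affine in $\eta^*$, every intermediate $\eta^*$ inherits the monotonicity.

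This convexity argument is the cleanest route, and we would lead with it. Its only nontrivial ingredient is the matched case $\eta^*=\hat\eta$, which should follow from the derivative formula above with $\eta^*=\hat\eta$, i.e. from the fact that $\frac{1-\hat\eta}{t+a}+\frac{\hat\eta a}{1+at}-\frac{1}{1+t}\ge0$. This last inequality must itself be verified by a short expansion.

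The conclusion then follows in both parametrizations. The unified $\Delta$ covers URM and UDPO simultaneously, so the optimal reward margin, or policy ratio, ranks $y_w$ above $y_l$ exactly as under clean data, which is the unbiased optimal solution.
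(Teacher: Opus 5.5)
Your lead argument is correct but organized differently from the paper's proof. The paper treats general $\eta^*<\hat\eta$ directly. It differentiates $(1-\eta^*)\mathcal{L}_{\text{unbiased}}(\Delta)+\eta^*\mathcal{L}_{\text{unbiased}}(-\Delta)$ and reduces negativity of the derivative to $e^{\Delta}\bigl(\eta^*-a(1-\eta^*)\bigr)<1-\eta^*(1+a)$. This holds because the left coefficient equals $(\eta^*-\hat\eta)/(1-\hat\eta)\le 0$ and the right side is positive; the matched case is only asserted from the URM/UDPO corollaries.

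You instead use that the risk is affine in $\eta^*$, writing $R_{\eta^*}=(1-\lambda)R_0+\lambda R_{\hat\eta}$ with $\lambda=\eta^*/\hat\eta\in[0,1]$. The hypothesis $\eta^*\le\hat\eta$ is then used only to make this a convex combination, and all computation is pushed to the two endpoints. This gives a cleaner conceptual explanation of downward compatibility. The paper's inequality, in exchange, shows more directly what fails when $\eta^*>\hat\eta$: the coefficient turns positive and the finite stationary point $e^{\Delta}=\frac{1-\eta^*(1+a)}{\eta^*(1+a)-a}$ from the calibration proof appears.

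Note that the corollaries do not by themselves show $R_{\hat\eta}$ is decreasing, since they only derive the loss. So the matched case rests on the computation you defer. It is immediate from the identity
\[
(1-\eta^*)(1+t)(1+at)+\eta^* a(1+t)(t+a)-(t+a)(1+at)=(1-a)\Bigl[\bigl(a-\eta^*(1+a)\bigr)t+1-\eta^*(1+a)\Bigr],
\]
since at $\eta^*=\hat\eta$ the bracket is the positive constant $(1-2\hat\eta)/(1-\hat\eta)$. A rigorous form of ``by the corollaries'' also exists. With $q=\frac{e^{\Delta}+a}{(1+a)(e^{\Delta}+1)}$ one has $R_{\eta^*}=-(1-\eta^*)\log q-\eta^*\log(1-q)-\log(1+a)$. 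Since $q$ increases through $(\hat\eta,1-\hat\eta)$, it stays below the cross-entropy minimizer $1-\eta^*$, and this handles every $\eta^*\le\hat\eta$ at once.

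Your sketch of the direct expansion, however, is wrong in its details. There is no $t^2$ term. The coefficient you attached to $t$, $(1-a)[(1-\eta^*)-a\eta^*]$, is actually the constant term, and the true linear coefficient is $(1-a)\bigl(a-\eta^*(1+a)\bigr)=(1-a)(\hat\eta-\eta^*)/(1-\hat\eta)$. So positivity does not reduce to $\eta^*(1+a)\le 1$. That condition holds for every $\eta^*<\frac12$ and would wrongly give monotonicity even for $a=0$, $\eta^*>0$, where the minimizer is the finite point $\sigma(\Delta)=1-\eta^*$. The hypothesis $\eta^*\le\hat\eta$ enters precisely through the sign of the linear coefficient, which is the paper's key inequality. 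Corrected this way, your direct route coincides with the paper's proof.
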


In practice, the exact noise rate of a dataset is often unknown. This downward compatibility property highlights the practical utility of our method: as long as $a$ is set high to cover the potential noise, the model achieves strong results. In contrast, existing methods such as label smoothing \cite{cdpo} and rDPO \cite{rdpo} lack this property; an inappropriate parameter choice in those methods can lead to performance that is inferior to standard RM or DPO loss.

\noindent\textbf{Classification Calibration and Excess Risk Bound.}
The alignment task can be formulated as a specific instance of binary classification. 
Let $Y \in \{-1, +1\}$ denote the preference label. Here, $Y=+1$ indicates $(y_1 \succ y_2|x)$, while $Y=-1$ indicates $(y_1 \prec y_2|x)$.
Given an input $X=(x,y_1,y_2)$, the model outputs the score difference
$f(X)=\Delta = r(x,y_1)-r(x,y_2).$ The classification margin is $u = Y f(X)$.
The margin-based form of the unbiased loss for classification is $\mathcal{L}_{\text{unbiased}}(u) = -\log \frac{\exp(u) + a}{\exp(u) + 1}$.

Classification calibration \cite{risk_bound}, also known as Fisher consistency \cite{fisher_consistency}, is an important property ensuring that minimizing a surrogate loss can obtain the Bayes-optimal classifier in binary classification.

\begin{definition}
A loss function $\mathcal{L}$ is classification-calibrated if the classifier which minimizes this surrogate loss function is identical to the Bayes optimal classifier that minimizes the 0-1 loss (classification error).    
\end{definition}

We demonstrate that our proposed loss satisfies this condition.

\begin{theorem}
Unbiased loss function $\mathcal{L}_{\text{unbiased}}$ is classification-calibrated.
\label{theo:classification_calibration}
\end{theorem}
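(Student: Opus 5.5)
Fix an input $X$ and let $p = P(Y=+1\mid X)$; this reduces the claim to a pointwise statement about the conditional risk. Write $\phi(u) = \mathcal{L}_{\text{unbiased}}(u) = -\log\frac{e^u+a}{e^u+1}$ with $0\le a<1$. The conditional surrogate risk of a score $f=f(X)$ is
\begin{equation*}
C_p(f) = p\,\phi(f) + (1-p)\,\phi(-f).
\end{equation*}
The Bayes classifier predicts $\mathrm{sign}(2p-1)$. It therefore suffices to show that for every $p\neq \tfrac12$, every minimizer (or infimizing sequence) $f^*$ of $C_p$ has $\mathrm{sign}(f^*)=\mathrm{sign}(2p-1)$. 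Equivalently, in the sense of \cite{risk_bound}, I will show $\inf_{f(2p-1)\le 0} C_p(f) > \inf_f C_p(f)$.

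\textbf{Step 1: basic properties of $\phi$.} From the gradient already computed in the text,
\begin{equation*}
\phi'(u) = \frac{e^u}{e^u+1} - \frac{e^u}{e^u+a} = \frac{e^u(a-1)}{(e^u+1)(e^u+a)} < 0 \quad \text{for } a<1.
\end{equation*}
So $\phi$ is strictly decreasing, with $\phi(-\infty) = -\log a$ (which is $+\infty$ if $a=0$) and $\phi(+\infty)=0$. Note that $\phi$ is not convex, so the convex criterion $\phi'(0)<0$ cannot be invoked directly. I will argue instead through symmetry and the sign of a difference.

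\textbf{Step 2: symmetry argument.} Assume without loss of generality that $p>\tfrac12$, and take any $f>0$. Then
\begin{equation*}
C_p(f) - C_p(-f) = (2p-1)\bigl(\phi(f)-\phi(-f)\bigr) < 0,
\end{equation*}
because $\phi$ is strictly decreasing. So every nonpositive score $-f<0$ is strictly beaten by its reflection $f>0$. For $f=0$, the derivative is $\frac{d}{df}C_p(0) = (2p-1)\phi'(0) < 0$, so moving slightly positive strictly decreases the risk. Hence $\inf_{f\le 0} C_p$ is never attained as the global infimum at a point, and every minimizer lies in $f>0$.

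\textbf{Step 3: strict gap, the main obstacle.} The obstacle is that $\phi$ is bounded and non-convex, so infima may be approached only as $f\to\pm\infty$; the pointwise comparison of Step 2 then only yields a non-strict inequality. I will close this gap with limits. As $f\to+\infty$, $C_p(f)\to(1-p)(-\log a)$; as $f\to-\infty$, $C_p(f)\to p(-\log a)$, which is strictly larger since $p>\tfrac12$ and $a<1$. On any compact interval $[-M,0]$, continuity together with Step 2 gives
\begin{equation*}
\min_{[-M,0]} C_p > \min_{[0,M]} C_p \ge \inf_f C_p.
\end{equation*}
Combining the compact case with the tail at $-\infty$ yields $\inf_{f\le0}C_p > \inf_f C_p$. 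The case $a=0$ reduces to the logistic loss, which is classically calibrated. Together these establish calibration, and the Bayes-optimal sign is recovered.
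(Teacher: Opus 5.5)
Your proof is correct, and it takes a genuinely different route from the paper's. The paper works quantitatively; its $\eta$ plays the role of your $1-p$. It notes that the stationarity condition $\partial_u C_\eta(u)=0$ reduces to a linear equation in $z=e^u$, and solves it to get $H(\eta)=\mathcal{H}_{\text{bin}}(\eta)-\log(1+a)$ for $\frac{a}{1+a}<\eta<\frac{1}{1+a}$, and $H(\eta)=-\min(\eta,1-\eta)\log a$ outside that range. It then takes $H^-(\eta)=C_\eta(0)=\log\frac{2}{1+a}$ and verifies $H^->H$ regime by regime: via $\mathcal{H}_{\text{bin}}(\eta)<\log 2$ in the middle, and via the auxiliary function $g(a)=\log\frac{2}{1+a}+\frac{a}{1+a}\log a>0$ in the saturated regions. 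You never compute $H$ or $H^-$. Your four ingredients already force the strict gap: the reflection $f\mapsto -f$, the sign of $(2p-1)\phi'(0)$, compactness of $[-M,0]$, and the comparison $p(-\log a)>(1-p)(-\log a)$ of the limits at $\mp\infty$. When you write it up, make the last combination explicit: choose $M$ with $\inf_{f\le -M}C_p>(1-p)(-\log a)\ge\inf_f C_p$, then take the minimum with your compact-interval bound. Your argument is more elementary and more general, since it uses only continuity, strict monotonicity, boundedness from below, and $\phi'(0)<0$. It also sidesteps a step the paper merely asserts, namely that the constrained infimum defining $H^-$ is attained at $u=0$. For this non-convex loss that claim actually rests on $C_\eta$ being unimodal, because the sign of $C_\eta'$ is that of an affine function of $e^u$. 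What the paper's explicit computation buys is the closed form of $H$ and $H^-$. That is exactly what is needed to build the $\psi$-transform in Corollary~\ref{coro:risk_bound}; your argument establishes calibration but gives no quantitative excess-risk bound.
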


Based on classification calibration, we establish a relationship between the excess risks with respect to the 0–1 loss and with respect to our unbiased loss.
In particular, let $R(f)$ denote the risk based on 0–1 loss, and let $R^* = \inf_f R( f )$ denote the Bayes risk. Similarly, let $R_\mathcal{L}(f) = \mathbb{E}_{(X, Y)}\mathcal{L}(Yf(X))$ be called the $\mathcal{L}$-risk and let $R^*_\mathcal{L} = \inf_f R_\mathcal{L}(f)$ denote the optimal $\mathcal{L}$-risk.

\begin{corollary}
The 0-1 loss excess risk $R(f) - R^*$ and the $\mathcal{L}_{\text{unbiased}}$ excess risk $R_{\mathcal{L}_{\text{unbiased}}}(f) - R_{\mathcal{L}_{\text{unbiased}}}^*$ satisfy:
\begin{equation}
\psi(R(f) - R^*) \le R_{\mathcal{L}_{\text{unbiased}}}(f) - R_{\mathcal{L}_{\text{unbiased}}}^*,    
\end{equation}
where $\psi: [0, 1] \to [0, \infty)$ is a piecewise convex function:
\begin{equation}
\psi(\rho) = \begin{cases} 
\log 2 - \mathcal{H}_{\text{bin}}\left(\frac{1+\rho}{2}\right) & \text{if }  0 \le \rho < \frac{1-a}{1+a} \\
\log \frac{2}{1+a} + \frac{1-\rho}{2} \log a & \text{if } \frac{1-a}{1+a} \le \rho \le 1
\end{cases},
\end{equation}
and $\mathcal{H}_{\text{bin}}(p) = -p \log p - (1-p) \log(1-p)$ is the binary entropy function.
\label{coro:risk_bound}
\end{corollary}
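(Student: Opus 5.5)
The plan is to apply the $\psi$-transform framework of \citet{risk_bound}. For a margin loss $\phi(u)=\mathcal{L}_{\text{unbiased}}(u)$, that framework defines the conditional risk
\[
C_\eta(\alpha)=\eta\,\phi(\alpha)+(1-\eta)\,\phi(-\alpha),
\]
its infimum $H(\eta)=\inf_\alpha C_\eta(\alpha)$, and the "wrong-sign" infimum $H^-(\eta)=\inf_{\alpha(2\eta-1)\le 0}C_\eta(\alpha)$. Setting $\psi(\rho)=H^-\!\big(\tfrac{1+\rho}{2}\big)-H\!\big(\tfrac{1+\rho}{2}\big)$, the general theorem gives $\tilde\psi(R(f)-R^*)\le R_\phi(f)-R_\phi^*$, where $\tilde\psi$ is the convex hull of $\psi$. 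I will compute $H$ and $H^-$ in closed form, check that the resulting $\psi$ is already convex (so $\tilde\psi=\psi$), and then invoke that theorem. Theorem~\ref{theo:classification_calibration} guarantees that the setting is non-degenerate.

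\textbf{Computing $H$.} I reparametrize through $t=\sigma(\alpha)$. Then
\[
\frac{e^\alpha+a}{e^\alpha+1}=a+(1-a)t, \qquad \frac{e^{-\alpha}+a}{e^{-\alpha}+1}=1-(1-a)t,
\]
and these two quantities sum to $1+a$. With $q=\frac{a+(1-a)t}{1+a}$, which ranges over $\big(\tfrac{a}{1+a},\tfrac{1}{1+a}\big)$ as $\alpha$ ranges over $\mathbb{R}$, the conditional risk becomes
\[
C_\eta=-\log(1+a)-\eta\log q-(1-\eta)\log(1-q).
\]
This is a shifted cross-entropy, strictly convex in $q$ and minimized at $q=\eta$. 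There are two cases.
\begin{itemize}
\item If $\eta\in\big[\tfrac{a}{1+a},\tfrac{1}{1+a}\big]$, the minimizer is interior and $H(\eta)=\mathcal{H}_{\text{bin}}(\eta)-\log(1+a)$.
\item If $\eta>\tfrac{1}{1+a}$, the infimum is approached as $\alpha\to\infty$, i.e. $q\to\tfrac1{1+a}$, and it equals $-(1-\eta)\log a$. The case $\eta<\tfrac{a}{1+a}$ follows by the symmetry $\eta\mapsto 1-\eta$.
\end{itemize}

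\textbf{Computing $H^-$.} For $\eta>1/2$, restricting to $\alpha\le 0$ means $q\le 1/2<\eta$. Since the cross-entropy is monotone on that side of its minimizer, the constrained minimum is at $\alpha=0$, giving $H^-(\eta)=C_\eta(0)=\log\frac{2}{1+a}$. This also recovers calibration, since $H^->H$ whenever $\eta\neq 1/2$.

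\textbf{Assembling $\psi$.} Substituting $\eta=\tfrac{1+\rho}{2}$, the condition $\eta\le\tfrac1{1+a}$ becomes $\rho\le\tfrac{1-a}{1+a}$. On that range,
\[
\psi(\rho)=\log\tfrac{2}{1+a}-\mathcal{H}_{\text{bin}}(\eta)+\log(1+a)=\log 2-\mathcal{H}_{\text{bin}}\big(\tfrac{1+\rho}{2}\big).
\]
Beyond it,
\[
\psi(\rho)=\log\tfrac{2}{1+a}+(1-\eta)\log a=\log\tfrac{2}{1+a}+\tfrac{1-\rho}{2}\log a,
\]
which matches the statement.

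\textbf{Convexity.} The first branch is convex because $-\mathcal{H}_{\text{bin}}$ is convex, and the second branch is affine. So convexity reduces to checking that the two pieces agree and have matching one-sided slopes at $\rho_0=\tfrac{1-a}{1+a}$. The slope of the first piece is $\tfrac12\log\tfrac{\eta}{1-\eta}$, which at $\eta=\tfrac1{1+a}$ equals $-\tfrac12\log a$. This is exactly the slope of the affine piece. Hence $\psi$ is $C^1$ and convex, and $\psi(0)=0$.

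\textbf{Main obstacle.} I expect the delicate step to be the boundary regime. There the infimum defining $H$ is not attained (the optimizer has $\alpha=\pm\infty$), and the degenerate case $a=0$ makes $\log a=-\infty$. For $a=0$ I would note that $\rho_0=1$, so only the first branch is active; this recovers the logistic-loss bound. For $a>0$ I would argue via limits of $C_\eta(\alpha)$ as $\alpha\to\pm\infty$ rather than via attained minimizers, which is legitimate because the Bartlett--Jordan--McAuliffe theorem only uses infima.
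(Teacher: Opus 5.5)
Your proposal is correct and follows the paper's proof. Both form $H^-(\tfrac{1+\rho}{2})-H(\tfrac{1+\rho}{2})$, split at $\rho_0=\tfrac{1-a}{1+a}$, check that the entropy branch and the affine branch meet with matching value and slope so the function is convex and equals its convex hull, and then invoke Theorem~1 of Bartlett--Jordan--McAuliffe. The only difference is how you obtain $H$ and $H^-$: your substitution $q=\frac{a+(1-a)\sigma(\alpha)}{1+a}$ turns $C_\eta$ into a shifted cross-entropy on $(\tfrac{a}{1+a},\tfrac{1}{1+a})$, which is cleaner than the paper's stationarity computation in $z=e^u$ and also justifies $H^-(\eta)=C_\eta(0)$, a step the paper only asserts.
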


This bound implies classification consistency: if a hypothesis $f$ achieves the optimal surrogate risk, i.e., $R_{\mathcal{L}_{\text{unbiased}}}(f)=R^*_{\mathcal{L}_{\text{unbiased}}}$, then it must also achieve the Bayes-optimal classification risk, i.e., $R(f)=R^*.$
Furthermore, we analyze the behavior of $\psi(\rho)$.
For $\mathcal{H}_{\text{bin}}$, we perform a Taylor expansion around $p = \frac{1}{2}$: $\mathcal{H}_{\text{bin}}(p) \approx \mathcal{H}_{\text{bin}}(\frac{1}{2}) + \mathcal{H}'_{\text{bin}}(\frac{1}{2})(p-\frac{1}{2}) + \frac{1}{2} \mathcal{H}''_{\text{bin}}(\frac{1}{2})(p-\frac{1}{2})^2= \log 2 - 2(p-\frac{1}{2})^2$. Substituting $p = \frac{1+\rho}{2}$, we obtain $\mathcal{H}_{\text{bin}}\left(\frac{1+\rho}{2}\right) \approx \log 2 - \frac{\rho^2}{2}$. Consequently, in the interval $0 \le \rho < \frac{1-a}{1+a}$, $\psi(\rho) \approx \frac{\rho^2}{2}$, similar to Logistic loss or Quadratic loss. Conversely, in the interval $\frac{1-a}{1+a} \le \rho \le 1$, $\psi(\rho)$ is linear with respect to $\rho$, similar to Hinge loss. 
This hybrid behavior suggests that the unbiased loss inherits desirable properties from both regimes:  a quadratic or Logistic-like region that provides smooth, fine-grained optimization near the optimum, and a hinge-like linear region that can be more robust to high-error (potentially noisy or outlier) samples. 
As a result, the loss can mitigate the impact of noisy data while preserving sufficient fitting ability.


\noindent\textbf{Property Visualization.}
We conduct a synthetic experiment to validate our theories and illustrate the properties of our method. Specifically, we sample preferences from one triplet $(x, y_w, y_l)$ with noise rate $\eta\in\{0.2, 0.4\}$.  We train a reward model $r_\phi$ using URM loss, setting $a$ to 0 (normal RM loss), $\frac{0.2}{0.8}$ (corresponding $\eta=0.2$), and $\frac{0.4}{0.6}$ (corresponding $\eta=0.4$), respectively. Similarly, we apply the UDPO loss to train a policy model $\pi_\theta$ under these same values of $a$, while fixing $\beta = 0.5$ and employing a uniform distribution for $\pi_{\text{ref}}$. The models are trained for 5000 steps using the Adam optimizer with a learning rate of 0.01 and a batch size of 4096. 
The results are visualized in Figure~\ref{fig:theoretical_experiment}.


It can be seen that the preference probability $p(y_w \succ y_l|x) = \sigma(r_\phi(x, y_w) - r_\phi(x, y_l))$ obtained via the normal RM loss is significantly biased by the noise rate, converging toward the value of $1-\eta$. In contrast, our URM loss demonstrates both noise-tolerance and parameter downward compatibility. Specifically, when $a = \frac{0.2}{0.8}$, the model remains robust to the 0.2 noise rate; when $a = \frac{0.4}{0.6}$, it maintains robustness across both 0.2 and 0.4 noise rates. For the UDPO loss, the preference probability $p(y_w \succ y_l|x) = \sigma \left( \beta \log \frac{\pi_\theta(y_w|x)\pi_{\text{ref}}(y_l|x)}{\pi_\theta(y_l|x)\pi_{\text{ref}}(y_w|x)} \right)$ exhibits behavior identical to that of the URM loss. To further illustrate the specific characteristics of the UDPO loss, we report the policy probability $\pi_\theta(y_w|x)$, which likewise exhibits noise-tolerance and parameter downward compatibility. 
These visualizations remain consistent with our theoretical analysis.

\noindent\textbf{Normalized Unbiased Loss.}
Gradient analysis reveals that the maximum gradient magnitude of the unbiased loss, $\max_\Delta|\frac{\partial\mathcal{L}_{\text{unbiased}}(\Delta)}{\partial\Delta}| = \frac{1-\sqrt{a}}{1+\sqrt{a}}$, decreases as $a$ increases. 
This reduction may imply that, relative to the normal loss, optimizing the unbiased loss may require a larger learning rate to achieve comparable update magnitudes.
To avoid additional learning rate adjustments, we can normalize the unbiased loss by a constant $\alpha = \frac{1+\sqrt{a}}{1-\sqrt{a}}$. With this scaling, the maximum gradient magnitude is preserved as $\max_\Delta|\frac{\partial\mathcal{L}_{\text{unbiased}}(\Delta)}{\partial\Delta}|=1$. 
We refer to the normalized variants as $\alpha$-URM and 
$\alpha$-UDPO. Since this modification is merely a constant rescaling of the objective, it does not alter the theoretical properties of URM and UDPO.

\noindent\textbf{Instance-Dependent Noise.} Our theory can naturally extend to instance-dependent noise. Specifically, let $\eta_x$ denote the noise rate for instance $x$, and define the corresponding constant $a_x$ as $\frac{\eta_x}{1-\eta_x}$. 
Since the proofs of the noise correction process in Theorem~\ref{theo:reward_correction} and Theorem~\ref{theo:dpo_correction} are derived pointwise, the instance-dependent versions follow directly by replacing the constant $a$ with $a_x$.
Although the $\eta_x$ of each instance is unknown, our parameter downward compatibility further implies that it suffices to use a single global parameter $\hat a$ satisfying $\hat a \ge \sup_x a_x$ (equivalently, $\hat{\eta} \ge \sup_x \eta_x$). In this case, the unbiased loss still recovers the unbiased optimal solution. 

\setlength{\tabcolsep}{10pt}
\begin{table*}[!t]
\centering
\caption{Win rates (\%) of different reward model learning methods vs. the SFT model under different manual flip rates (0\%, 20\%, 40\%). The top-2 best results are in bold.}
\label{tab:reward_hh_tldr}

\begin{tabular}{c|c|ccc|ccc|c}
\toprule
\multirow{2}{*}{Models} & \multirow{2}{*}{Methods} & \multicolumn{3}{c|}{HH}                     & \multicolumn{3}{c|}{TL;DR}                   & \multirow{2}{*}{Average} \\
                                 &                                   & 0\%              & 20\%             & 40\%             & 0\%              & 20\%             & 40\%             &                                   \\
                                 \midrule\midrule
\multirow{5}{*}{Llama-3.2-3B}    & RM                                & 80.0           & 78.3           & 69.7           & 79.4           & 76.4           & 69.1           & 75.5                             \\
                                 & cRM                               & 80.7           & 76.1           & 70.1           & 80.8           & 76.5           & 69.9           & 75.7                             \\
                                 & rRM                               & 79.9           & 78.3           & 72.4           & 77.3           & 77.7           & 68.6           & 75.7                             \\
                                 \cmidrule{2-9}
                                 & \textbf{URM}                      & \textbf{81.4} & \textbf{78.9} & \textbf{72.6} & \textbf{82.8} & \textbf{83.2} & \textbf{77.5} & \textbf{79.4}                   \\
                                 & \textbf{$\alpha$-URM}                    & \textbf{82.5} & \textbf{81.4} & \textbf{74.3} & \textbf{83.4} & \textbf{81.9} & \textbf{76.1} & \textbf{79.9}                   \\
                                 \midrule\midrule
\multirow{5}{*}{Qwen-3-1.7B}     & RM                                & 63.4           & 55.5           & 43.9           & 81.3           & 76.2           & 63.3           & 63.9                             \\
                                 & cRM                               & 64.2           & 56.0           & 46.2           & 80.4           & 76.6           & 66.4           & 65.0                             \\
                                 & rRM                               & 65.7           & \textbf{62.5} & 52.1           & 76.8           & 73.6           & 64.7           & 65.9                             \\
                                 \cmidrule{2-9}
                                 & \textbf{URM}                      & \textbf{67.6} & \textbf{61.8} & \textbf{58.2} & \textbf{81.5} & \textbf{80.0} & \textbf{71.6} & \textbf{70.1}                   \\
                                 & \textbf{$\alpha$-URM}                    & \textbf{66.9} & 61.6           & \textbf{57.9} & \textbf{82.9} & \textbf{79.2} & \textbf{68.9} & \textbf{69.6}   \\
                                 \bottomrule
\end{tabular}

\end{table*}
\setlength{\tabcolsep}{10pt}
\begin{table*}[!t]
\centering
\caption{Win rates (\%) of different supervised preference learning methods vs the SFT model under different manual flip rates (0\%, 20\%, 40\%). The top-2 best results are in bold.}
\label{tab:dpo_hh_tldr}
\begin{tabular}{c|c|ccc|ccc|c}
\toprule
\multirow{2}{*}{Models} & \multirow{2}{*}{Methods} & \multicolumn{3}{c|}{HH}                     & \multicolumn{3}{c|}{TL;DR}                    & \multirow{2}{*}{Average} \\
                                 &                                   & 0\%    & 20\%   & 40\%   & 0\%     & 20\%   & 40\%   &                                   \\
                                 \midrule\midrule
\multirow{7}{*}{Llama-3.2-3B}    & DPO                               & 85.4          & 80.4          & 69.8          & 72.2           & 64.6          & 56.1          & 71.4                            \\
                                 & cDPO                              & 85.3          & 80.2          & 70.9          & 69.6           & 61.0          & 57.0          & 70.7                            \\
                                 & IPO                               & 85.6          & 89.3          & 80.0          & 86.5           & 80.3          & 69.3          & 81.8                            \\
                                 & rDPO                              & 88.0          & 89.3          & 84.4          & 85.9           & 83.5          & 77.3          & 84.7                            \\
                                 & Dr.DPO                            & 88.2          & 88.2          & 81.6          & \textbf{86.8} & 84.3          & \textbf{82.9} & 85.0                            \\
                                 \cmidrule{2-9}
                                 & \textbf{UDPO}                     & \textbf{90.8} & \textbf{90.4} & \textbf{85.6} & 86.3           & \textbf{85.3} & 77.2          & \textbf{85.9}                   \\
                                 & \textbf{$\alpha$-UDPO}                   & \textbf{90.4} & \textbf{89.8} & \textbf{85.7} & \textbf{87.2} & \textbf{86.4} & \textbf{78.2} & \textbf{86.1}                   \\
                                 \midrule\midrule
\multirow{7}{*}{Qwen-3-1.7B}     & DPO                               & 75.4          & 68.2          & 58.2          & 73.9           & 65.4          & 54.5          & 65.9                            \\
                                 & cDPO                              & 72.8          & 70.1          & 59.3          & 67.2           & 60.5          & 51.9          & 63.6                            \\
                                 & IPO                               & 71.7          & 68.7          & 63.3          & 82.1           & 78.9          & 67.1          & 72.0                            \\
                                 & rDPO                              & 78.8          & 72.9          & 68.1          & 84.3  & 72.4          & 70.2          & 74.5                            \\
                                 & Dr.DPO                            & 80.2          & 77.7          & 68.2          & 82.9           & 80.1          & 70.5          & 76.6                            \\
                                 \cmidrule{2-9}
                                 & \textbf{UDPO}                     & \textbf{83.4} & \textbf{81.2} & \textbf{73.0} & \textbf{84.6}           & \textbf{83.0} & \textbf{71.3} & \textbf{79.4}                   \\
                                 & \textbf{$\alpha$-UDPO}                   & \textbf{81.3} & \textbf{80.3} & \textbf{73.2} & \textbf{86.3}  & \textbf{84.9} & \textbf{71.5} & \textbf{79.6}     \\
                                 \bottomrule
\end{tabular}
\end{table*}

\section{Experiments}
In this section, we validate the effectiveness of our proposed methods through comprehensive experiments, spanning reward model training and supervised preference training. Detailed experimental settings are provided in the Appendix~\ref{ap:exp}.

\noindent\textbf{Datasets.}
We evaluate our methods on three widely used real-world datasets, including the dialogue dataset Anthropic-HH (helpful-base) \cite{hh}, the summarization dataset Reddit TL;DR \cite{tldr}, and the comprehensive dataset UltraFeedback Binarized (UFB) \cite{ufb}. 

\noindent\textbf{Baselines.}
We compare our method with several state-of-the-art approaches, with a particular focus on robustness-oriented methods. For reward model training, we consider the standard RM \cite{rlhf}, label smoothing (cRM), and rRM \cite{rdpo}. For supervised preference training, we consider DPO \cite{dpo}, label smoothing (cDPO), IPO \cite{ipo}, rDPO \cite{rdpo}, and Dr.DPO \cite{drdpo}. 
We search for the optimal hyperparameter for each baseline to ensure a fair comparison.

\subsection{Evaluations on HH and TL;DR}
We first conduct evaluations on the dialogue dataset HH and the summarization dataset TL;DR.

\noindent\textbf{Setup.}
We use the Llama-3.2-3B \cite{llama3} and Qwen-3-1.7B \cite{qwen3} as the base models. 
To obtain the SFT model, we fine-tune the base model exclusively on preferred completions similar to \cite{dpo}.
For reward model evaluation, following \cite{webgpt, scaling_law_rm}, we employ a best-of-$n$ sampling strategy.
Specifically, we let the SFT model generate $n=20$ responses, and then use the reward model to select the highest-scoring response, which is compared with the default output of the SFT model.
The best-of-$n$ metric offers a more stable comparison of reward model quality.
For supervised preference training evaluation, following \cite{dpo}, we compare the responses generated by the trained policy model against those of the SFT model. We employ GPT-5 (GPT-5-chat-latest) as the judge to calculate the win rate. Beyond the inherent noise in the HH and TL;DR datasets, we also investigate scenarios with 20\% and 40\% manually flipped labels to simulate higher noise levels. Please note that the original dataset already contains noise; therefore, a manual flip rate of 0\% does not imply no noise.

\noindent\textbf{Results.}
We report the win rates for reward model training and supervised preference learning in Table~\ref{tab:reward_hh_tldr} and Table~\ref{tab:dpo_hh_tldr}, respectively. 
For reward model learning (Table~\ref{tab:reward_hh_tldr}), our proposed URM and \(\alpha\)-URM losses consistently outperform all baselines in the original setting (0\% flip rate), indicating that they effectively mitigate intrinsic noise in real-world data. As we inject additional synthetic noise, our methods remain more robust. For example, with Llama-3.2-3B on TL;DR at a 40\% flip rate, URM and \(\alpha\)-URM improve win rates by approximately 8\% over the baselines. Relative to the normal RM loss, cRM and rRM do not yield a meaningful improvement in average win rate. In contrast, across both Llama-3.2-3B and Qwen-3-1.7B, our methods exceed the SOTA baseline by roughly 4\% in average win rate.
For supervised preference learning (Table~\ref{tab:dpo_hh_tldr}),  our methods have achieved the best results in most cases. 
The average win rate of UDPO and $\alpha$-UDPO exceeded that of normal DPO by around 15\%.
This substantial gap confirms that normal DPO is prone to overfitting noisy preferences, while our unbiased framework successfully recovers the unbiased policy. 
UDPO and $\alpha$-UDPO consistently achieve the highest average win rates across all datasets, which further validates their effectiveness in both naturally noisy and highly corrupted environments.
\begin{figure}[t]
    \centering
    \begin{subfigure}[t]{0.23\textwidth}
        \centering
        \includegraphics[width=\textwidth]{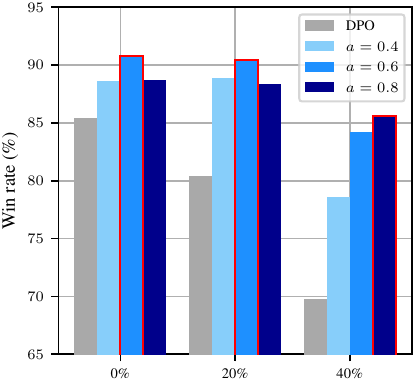}
        \caption{UDPO}
    \end{subfigure}
    \hfill 
    \begin{subfigure}[t]{0.23\textwidth}
        \centering
        \includegraphics[width=\textwidth]{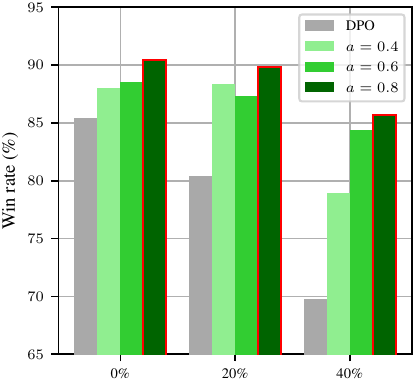}
        \caption{$\alpha$-UDPO}
    \end{subfigure}

    \caption{The ablation results of UDPO and $\alpha$-UDPO using Llama-3.2-3B on HH with different manual flip rates (0\%, 20\%, 40\%). The best results in each case are highlighted with a red border.}
    \label{fig:ablation}
\end{figure}

\noindent\textbf{Ablation Experiment.}
It can be observed that the unbiased loss and the normalized unbiased loss achieve comparable strong performance. We further report the hyperparameter ablation results using UDPO and \(\alpha\)-UDPO of Llama-3.2-3B on the HH dataset, as shown in Figure~\ref{fig:ablation}. The results indicate that UDPO is more sensitive to hyperparameter choices, whereas the normalized \(\alpha\)-UDPO consistently attains optimal performance across settings when using a larger \(a\). This behavior is consistent with our parameter downward compatibility theory.

\subsection{Evaluations on UFB.}
We conduct experiments using larger models on the more complex dataset UFB.

\noindent\textbf{Setup.}
We use the Llama-3.1-8B \cite{llama3} and Qwen-3-8B \cite{qwen3} as the base models. 
To obtain the SFT model, we fine-tune the base model on the instruct dataset Capybara \cite{capybara}. The assessment method is consistent with HH and TL;DR.

\noindent\textbf{Results.}
We report the win rates for reward model training and supervised preference training in Table~\ref{tab:reward_ufb} and Table~\ref{tab:dpo_ufb}, respectively. 
For reward model learning (Table~\ref{tab:reward_ufb}), our proposed URM and $\alpha$-URM losses consistently outperform all baselines. Notably, on the Qwen-3-8B model, while the normal RM performs poorly with a 54.3\% win rate, our URM and $\alpha$-URM achieve significantly higher win rates of 64.6\% and 64.9\% respectively, surpassing the normal RM loss. 
For supervised preference learning (Table~\ref{tab:dpo_ufb}), UDPO and $\alpha$-UDPO consistently achieve the best results. Our methods outperform the SOTA methods by approximately 2\% on both models. These results on the complex UFB dataset with larger models further validate the scalability and robustness of our unbiased alignment framework.

\setlength{\tabcolsep}{10pt}
\begin{table}[!t]
\centering
\caption{Win rates (\%) of different reward model learning methods vs. the SFT model on UFB. The top-2 best results are in bold.}
\label{tab:reward_ufb}
\begin{tabular}{c|cc}
\toprule
\multirow{2}{*}{Methods} & \multicolumn{2}{c}{UFB}    \\
                                  & Llama-3.1-8B     & Qwen-3-8B        \\
                                  \midrule\midrule
RM                                & 68.9          & 54.3          \\
cRM                               & 66.5          & 52.9          \\
rRM                               & 63.6          & 63.9          \\
\midrule
\textbf{URM}                      & \textbf{72.5} & \textbf{64.6} \\
\textbf{$\alpha$-URM}                    & \textbf{72.3} & \textbf{64.9} \\
\bottomrule
\end{tabular}
\end{table}
\setlength{\tabcolsep}{10pt}
\begin{table}[!t]
\centering
\caption{Win rates (\%) of different supervised preference learning methods vs the SFT model on UFB. The top-2 best results are in bold.}
\label{tab:dpo_ufb}
\begin{tabular}{c|cc}
\toprule
\multirow{2}{*}{Methods} & \multicolumn{2}{c}{UFB}           \\
                         & Llama-3.1-8B    & Qwen-3-8B       \\
                         \midrule\midrule
DPO                      & 65.4          & 56.3          \\
cDPO                     & 63.1          & 55.1          \\
IPO                      & 72.7          & 57.8          \\
rDPO                     & 72.1          & 58.3          \\
Dr.DPO                   & 65.8          & 56.5          \\
\midrule
\textbf{UDPO}            & \textbf{74.3} & \textbf{60.6} \\
\textbf{$\alpha$-UDPO}          & \textbf{73.0} & \textbf{60.7} \\
\bottomrule
\end{tabular}
\end{table}
\section{Conclusion}

This work introduces an unbiased alignment framework for LLM alignment with noisy preferences. By modeling preference corruption as a mathematical transition process and analytically inverting the distortion, we derive two practical objectives: the \textit{Unbiased Reward Model} (URM) loss for reward learning in RLHF, and the \textit{Unbiased Direct Preference Optimization} (UDPO) loss for supervised preference optimization. We further show that both objectives admit a unified margin-based form that preserves the fundamental RLHF-DPO equivalence, constituting a unified framework. Our theoretical analysis establishes noise-tolerance, parameter downward compatibility, and classification calibration. Extensive experiments demonstrate that our methods show consistent improvements over strong baselines. Overall, our work offers a concise, interpretable, and effective route to robust LLM alignment.

\section*{Impact Statement}
This work presents a novel advance in LLM robust alignment, with positive implications for developing safe and reliable AI systems. This work is not expected to have a negative social impact.

\section*{Acknowledgement}
This work was supported by National Natural Science Foundation of China under Grants 62525107.

\bibliography{icml2026}
\bibliographystyle{icml2026}

\newpage
\appendix
\onecolumn

\section{Proofs}
\label{ap:proofs}
\paragraph{Proof of Lemma~\ref{lemm:noisy_reward}}
\begin{proof}
Let $\hat p(y_w\succ y_l|x) = \sigma\big(r_\phi(x,y_w)-r_\phi(x,y_l)\big)$, under the noisy preference, we have:
\begin{equation}
\begin{split}
    \mathcal{L_\text{RM}}&=-\big[
(1-\eta)\,\log \sigma\!\big(r_\phi(x,y_w)-r_\phi(x,y_l)\big) 
+\eta\,\log \sigma\!\big(r_\phi(x,y_l)-r_\phi(x,y_w)\big)\big]
\\
&=-\big[(1-\eta)\log \hat p(y_w\succ y_l|x) + \eta \log \hat p(y_l\succ y_w|x)\big].
\end{split}
\end{equation}
Hence, for the gradient $\frac{\nabla \mathcal{L}}{\nabla \hat p(y_w\succ y_l|x)}$, we have:
\begin{equation}
\frac{\nabla \mathcal{L}}{\nabla \hat p(y_w\succ y_l|x)} = -(1-\eta)\frac{1}{\hat p(y_w\succ y_l|x)} + \eta \frac{1}{1-\hat p(y_w\succ y_l|x)}.
\end{equation}
Let $\frac{\nabla \mathcal{L}}{\nabla \hat p(y_w\succ y_l|x)}=0$, because $p^*(y_w\succ y_l|x)=1$ and $p^*(y_l\succ y_w|x) =0$, we have the optimal solution $\hat p(y_w\succ y_l|x)=1-\eta=(1-\eta)p^*(y_w\succ y_l|x)+\eta p^*(y_l\succ y_w|x)$.
\end{proof}

\paragraph{Proof of Theorem~\ref{theo:reward_correction}}
\begin{proof}
Referencing Lemma~\ref{lemm:noisy_reward}, we know the relationship between the noisy preference probability and the unbiased preference probability is:

\begin{equation}
\begin{split}
\sigma(r_\phi^\eta(x,y_w) - r_\phi^\eta(x,y_l))=&(1-\eta)\cdot \sigma(r_\phi^*(x,y_w) - r_\phi^*(x,y_l)) + \eta \cdot \sigma(r_\phi^*(x,y_l) - r_\phi^*(x,y_w))\\
=&(1-\eta)\cdot \sigma(r_\phi^*(x,y_w) - r_\phi^*(x,y_l)) + \eta\cdot(1-\sigma(r_\phi^*(x,y_w) - r_\phi^*(x,y_l))) \\
=&(1-2\eta)\cdot \sigma(r_\phi^*(x,y_w) - r_\phi^*(x,y_l)) + \eta
\end{split}
\end{equation}

To simplify the symbols, we set $\Delta^\eta=r_\phi^\eta(x, y_w)-r_\phi^\eta(x,y_l)$ and $\Delta^*=r_\phi^*(x, y_w)-r_\phi^*(x,y_l)$.
Opening the sigmoid function $\sigma$, we have:
\begin{equation}
\begin{split}
&\frac{\exp(r_\phi^\eta(x, y_w))}{\exp(r_\phi^\eta(x, y_w))+\exp(r_\phi^\eta(x, y_l))}= (1-2\eta)\cdot  \frac{\exp(r_\phi^*(x, y_w))}{\exp(r_\phi^*(x, y_w))+\exp(r_\phi^*(x, y_l))} + \eta \\
\Rightarrow &\frac{\exp(\Delta^\eta)}{\exp(\Delta^\eta)+1}=(1-2\eta)\cdot  \frac{\exp(\Delta^*)}{\exp(\Delta^*)+1} + \eta\\ 
\Rightarrow & \exp(\Delta^\eta)(\exp(\Delta^*)+1)  = (\exp(\Delta^\eta)+1)[\eta+(1-\eta)\exp(\Delta^*)] \\
\Rightarrow &\exp(\Delta^*)[\exp(\Delta^\eta)-(1-\eta)\exp(\Delta^\eta)-(1-\eta)] =\eta \exp(\Delta^\eta)+\eta-\exp(\Delta^\eta) \\
\Rightarrow & \exp(\Delta^*) =\frac{\eta \exp(\Delta^\eta)+\eta-\exp(\Delta^\eta)}{\exp(\Delta^\eta)-(1-\eta)\exp(\Delta^\eta)-(1-\eta)} \\
\Rightarrow & \exp(\Delta^*)= \frac{(\eta-1)\exp(\Delta^\eta)+\eta}{\eta \exp(\Delta^\eta) - (1-\eta)} = \frac{\exp(\Delta^\eta)-\frac{\eta}{1-\eta}}{1-\frac{\eta}{1-\eta}\exp(\Delta^\eta)}\\
\Rightarrow & \Delta^* = \log\left[\frac{\exp(\Delta^\eta)-\frac{\eta}{1-\eta}}{1-\frac{\eta}{1-\eta}\exp(\Delta^\eta)}\right]. \\
\Rightarrow & r_\phi^*(x,y_w)-r_\phi^*(x,y_l)
= 
\log \left[\frac{\exp\!\big(r_\phi^\eta(x,y_w)-r_\phi^\eta(x,y_l)\big)-a}
{1-a\cdot\exp\big(r_\phi^\eta(x,y_w)-r_\phi^\eta(x,y_l)\big)}\right],
\end{split}
\end{equation}
where $a = \frac{\eta}{1-\eta}$ is a constant.
\end{proof}

\paragraph{Proof of Corollary~\ref{coro:urm_loss}}

\begin{proof}
Refer to Theorem~\ref{theo:reward_correction}, we have :
\begin{equation}
\begin{split}
&\exp(\Delta^*) = \frac{\exp(\Delta^\eta)-a}{1-a\cdot\exp(\Delta^\eta)} \\
\Rightarrow & \exp(\Delta^*)(1-a\cdot\exp(\Delta^\eta))= \exp(\Delta^\eta) -a \\
\Rightarrow & \exp(\Delta^*) - a\cdot\exp(\Delta^*)\exp(\Delta^\eta)=\exp(\Delta^\eta)-a\\
\Rightarrow & \exp(\Delta^\eta) + a\cdot\exp(\Delta^*)\exp(\Delta^\eta)=\exp(\Delta^*)+a \\
\Rightarrow & \exp(\Delta^\eta) = \frac{\exp(\Delta^*)+a}{1+a\cdot\exp(\Delta^*)} \\
\Rightarrow & \Delta^\eta = \log\big[ \frac{\exp(\Delta^*)+a}{1+a\cdot\exp(\Delta^*)}\big]. \\
\Rightarrow & r_\phi^\eta(x,y_w)-r_\phi^\eta(x,y_l) = \log\big[ \frac{\exp(r_\phi^*(x, y_w)-r_\phi^*(x,y_l))+a}{1+a\cdot\exp(r_\phi^*(x, y_w)-r_\phi^*(x,y_l))}\big]
\end{split}
\end{equation}

Under the noise preference, the normal reward model loss $\mathcal{L_\text{RM}}$  optimize the noisy reward model $r_\phi^\eta$, i.e. $\mathcal{L_\text{RM}}=-\log\sigma(r_\phi^\eta(x,y_w)-r_\phi^\eta(x,y_l))$. Therefore, by replacing $r_\phi^\eta(x,y_w)-r_\phi^\eta(x,y_l)$ as $\log\big[ \frac{\exp(r_\phi^*(x, y_w)-r_\phi^*(x,y_l))+a}{1+a\cdot\exp(r_\phi^*(x, y_w)-r_\phi^*(x,y_l))}\big]$, we can directly optimize the unbiased reward model $r_\phi^*$. Our unbiased reward model loss as follows:
\begin{equation}
\begin{split}
\mathcal{L_\text{URM}}& =-\log\sigma \big(\log\big[\frac{\exp(r^*_\phi(x,y_w)-r^*_\phi(x,y_l))+a}{1+a \cdot \exp(r^*_\phi(x,y_w)-r^*_\phi(x,y_l)}\big]\big)\\
&=-\log \frac{1}{1+\frac{{1+a\cdot\exp(\Delta^*)}}{\exp(\Delta^*)+a}}\\
&=-\log\frac{{\exp(\Delta^*)+a}}{\exp(\Delta^*)+a+1+a\cdot\exp(\Delta^*)}\\
&=-\log \frac{\exp(\Delta^*)+a}{(a+1)(\exp(\Delta^*)+1)} \\
&=-[\log \frac{\exp(\Delta^*)+a}{\exp(\Delta^*)+1}-\log(a+1)].
\end{split}
\label{eq:urm_loss}
\end{equation}
Since $\log(a+1)$ is a constant, we can disregard it. Therefore, our final loss for training is  $\mathcal{L_\text{URM}}= -\log \frac{\exp(r_\phi(x,y_w)-r_\phi(x,y_l))+a}{\exp(r_\phi(x,y_w)-r_\phi(x,y_l))+1}$.
The derivation of Equation~\ref{eq:urm_loss} enables us to perform less operations of $\sigma$ and $\log$ when calculating the loss.

\end{proof}

\paragraph{Proof of Lemma~\ref{lemm:noisy_dpo}}

\begin{proof}
Let $\hat p(y_w\succ y_l|x) = \sigma \left( \beta \log \frac{\pi_\theta(y_w|x)\pi_{\text{ref}}(y_l|x)}{\pi_\theta(y_l|x)\pi_{\text{ref}}(y_w |x)} \right)$, under the noisy preference, we have:
\begin{equation}
\begin{split}
    \mathcal{L_\text{DPO}}&=-\big[
(1-\eta)\log\sigma \left( \beta \log \frac{\pi_\theta(y_w|x)\pi_{\text{ref}}(y_l|x)}{\pi_\theta(y_l|x)\pi_{\text{ref}}(y_w |x)} \right)
+\eta\log \sigma \left( \beta \log \frac{\pi_\theta(y_l|x)\pi_{\text{ref}}(y_w|x)}{\pi_\theta(y_w|x)\pi_{\text{ref}}(y_l|x)} \right)\big]
\\
&=-\big[(1-\eta)\log \hat p(y_w\succ y_l|x) + \eta \log \hat p(y_l\succ y_w|x)\big].
\end{split}
\end{equation}
Hence, for the gradient $\frac{\nabla \mathcal{L}}{\nabla \hat p(y_w\succ y_l|x)}$, we have:
\begin{equation}
\frac{\nabla \mathcal{L}}{\nabla \hat p(y_w\succ y_l|x)} = -(1-\eta)\frac{1}{\hat p(y_w\succ y_l|x)} + \eta \frac{1}{1-\hat p(y_w\succ y_l|x)}.
\end{equation}
Let $\frac{\nabla \mathcal{L}}{\nabla \hat p(y_w\succ y_l|x)}=0$, because $p^*(y_w\succ y_l|x)=1$ and $p^*(y_l\succ y_w|x) =0$, we have the optimal solution $\hat p(y_w\succ y_l|x)=1-\eta=(1-\eta)p^*(y_w\succ y_l|x)+\eta p^*(y_l\succ y_w|x)$.
\end{proof}

\paragraph{Proof of Theorem~\ref{theo:dpo_correction}}

\begin{proof}
Referencing Lemma~\ref{lemm:noisy_dpo}, we know the relationship between the noisy preference probability and the unbiased preference probability is:
\begin{equation}
\begin{split}
\sigma(\beta \log \frac{\pi^\eta_{\theta}(y_w|x)\pi_\text{ref}(y_l|x)}{\pi^\eta_{\theta}(y_l|x)\pi_\text{ref}(y_w|x)}) &= (1-\eta)\sigma(\beta \log \frac{\pi^*_{\theta}(y_w|x)\pi_\text{ref}(y_l|x)}{\pi^*_{\theta}(y_l|x)\pi_\text{ref}(y_w|x)}) + \eta\sigma(\beta \log \frac{\pi^*_{\theta}(y_l|x)\pi_\text{ref}(y_w|x)}{\pi^*_{\theta}(y_w|x)\pi_\text{ref}(y_l|x)})  
\end{split}
\end{equation}

To simplify the symbols, we set $U^\eta=(\frac{\pi^\eta_\theta(y_w|x)\pi_{\text{ref}}(y_l|x)} {\pi^\eta_\theta(y_l|x)\pi_{\text{ref}}(y_w|x)})^\beta$ and $U^*=(\frac{\pi^*_\theta(y_w|x)\pi_{\text{ref}}(y_l|x)} {\pi^*_\theta(y_l|x)\pi_{\text{ref}}(y_w|x)})^\beta$. Opening the sigmoid function $\sigma$, we have:


\begin{equation}
\begin{split}
\frac{\exp(\beta \log \frac{\pi^\eta_{\theta}(y_w|x)\pi_\text{ref}(y_l|x)}{\pi^\eta_{\theta}(y_l|x)\pi_\text{ref}(y_w|x)})}{1+\exp(\beta \log \frac{\pi^\eta_{\theta}(y_w|x)\pi_\text{ref}(y_l|x)}{\pi^\eta_{\theta}(y_l|x)\pi_\text{ref}(y_w|x)})} &= (1-\eta)\frac{\exp(\beta \log \frac{\pi^*_{\theta}(y_w|x)\pi_\text{ref}(y_l|x)}{\pi^*_{\theta}(y_l|x)\pi_\text{ref}(y_w|x)})}{1+\exp(\beta \log \frac{\pi^*_{\theta}(y_w|x)\pi_\text{ref}(y_l|x)}{\pi^*_{\theta}(y_l|x)\pi_\text{ref}(y_w|x)})} + \eta\left(1 - \frac{\exp(\beta \log \frac{\pi^*_{\theta}(y_w|x)\pi_\text{ref}(y_l|x)}{\pi^*_{\theta}(y_l|x)\pi_\text{ref}(y_w|x)})}{1+\exp(\beta \log \frac{\pi^*_{\theta}(y_w|x)\pi_\text{ref}(y_l|x)}{\pi^*_{\theta}(y_l|x)\pi_\text{ref}(y_w|x)})}\right) \\   
\Rightarrow& \frac{U^\eta}{1+U^\eta} = (1-\eta)\frac{U^*}{1+U^*} + \eta\left(1 - \frac{U^*}{1+U^*}\right)\\
\Rightarrow&\frac{U^\eta}{1+U^\eta} = (1-\eta)\frac{U^*}{1+U^*} + \eta\frac{1}{1+U^*} \\
\Rightarrow&\frac{U^\eta}{1+U^\eta} = \frac{(1-\eta)U^* + \eta}{1+U^*} \\
\Rightarrow&U^\eta(1+U^*) = (1+U^\eta)((1-\eta)U^* + \eta)\\
\Rightarrow&U^\eta + U^\eta U^* = (1-\eta)U^* + \eta + (1-\eta)U^\eta U^* + \eta U^\eta \\
\Rightarrow&U^\eta U^* - (1-\eta)U^* - (1-\eta)U^\eta U^* = \eta + \eta U^\eta - U^\eta \\
\Rightarrow&U^*(U^\eta - (1-\eta) - (1-\eta)U^\eta) = \eta - (1-\eta)U^\eta \\
\Rightarrow&U^*(\eta U^\eta - 1 + \eta) = \eta - (1-\eta)U^\eta \\
\Rightarrow& U^* = \frac{\eta - (1-\eta)U^\eta}{\eta U^\eta - (1-\eta)}  = \frac{U^\eta - \frac{\eta}{1-\eta}}{1 - \frac{\eta}{1-\eta}U^\eta}\\
\Rightarrow& (\frac{\pi^*_\theta(y_w|x)\pi_{\text{ref}}(y_l|x)} {\pi^*_\theta(y_l|x)\pi_{\text{ref}}(y_w|x)})^\beta = \frac{(\frac{\pi^\eta_\theta(y_w|x)\pi_{\text{ref}}(y_l|x)} {\pi^\eta_\theta(y_l|x)\pi_{\text{ref}}(y_w|x)})^\beta - a}{1 - a\cdot (\frac{\pi^\eta_\theta(y_w|x)\pi_{\text{ref}}(y_l|x)} {\pi^\eta_\theta(y_l|x)\pi_{\text{ref}}(y_w|x)})^\beta},
\end{split}    
\end{equation}
where $a=\frac{\eta}{1-\eta}$ is a constant.


\end{proof}

\paragraph{Proof of Corollary~\ref{coro:udpo_loss}}

\begin{proof}
Refer to Theorem~\ref{theo:dpo_correction}, we have:
\begin{equation}
\begin{split}
&(\frac{\pi^*_\theta(y_w|x)\pi_{\text{ref}}(y_l|x)} {\pi^*_\theta(y_l|x)\pi_{\text{ref}}(y_w|x)})^\beta = \frac{(\frac{\pi^\eta_\theta(y_w|x)\pi_{\text{ref}}(y_l|x)} {\pi^\eta_\theta(y_l|x)\pi_{\text{ref}}(y_w|x)})^\beta - a}{1 - a \cdot(\frac{\pi^\eta_\theta(y_w|x)\pi_{\text{ref}}(y_l|x)} {\pi^\eta_\theta(y_l|x)\pi_{\text{ref}}(y_w|x)})^\beta} \\
\Rightarrow& U^*(1 - a U^\eta)=U^\eta - a \\
\Rightarrow& U^*-aU^*U^\eta=U^* - a \\
\Rightarrow& U^* + a =U^\eta(1+aU^*) \\
\Rightarrow& U^\eta = \frac{U^*+a}{1+aU^*} \\
\Rightarrow& (\frac{\pi^\eta_\theta(y_w|x)\pi_{\text{ref}}(y_l|x)} {\pi^\eta_\theta(y_l|x)\pi_{\text{ref}}(y_w|x)})^\beta = \frac{(\frac{\pi^*_\theta(y_w|x)\pi_{\text{ref}}(y_l|x)} {\pi^*_\theta(y_l|x)\pi_{\text{ref}}(y_w|x)})^\beta+a}{1+a\cdot (\frac{\pi^*_\theta(y_w|x)\pi_{\text{ref}}(y_l|x)} {\pi^*_\theta(y_l|x)\pi_{\text{ref}}(y_w|x)})^\beta}
\end{split}
\end{equation}


The standard DPO loss optimizes the model likelihood against the preference dataset. 

Under noisy preferences, the standard DPO loss minimizes the negative log-likelihood of the noisy probability distribution, i.e.,  $\mathcal{L}_\text{DPO}=-\log\sigma(\beta \log \frac{\pi^\eta_{\theta}(y_w|x)\pi_\text{ref}(y_l|x)}{\pi^\eta_{\theta}(y_l|x)\pi_\text{ref}(y_w|x)})=-\log[\sigma(\log (\frac{\pi^\eta_\theta(y_w|x)\pi_{\text{ref}}(y_l|x)} {\pi^\eta_\theta(y_l|x)\pi_{\text{ref}}(y_w|x)})^\beta)]$

To obtain the unbiased loss, we substitute $(\frac{\pi^\eta_\theta(y_w|x)\pi_{\text{ref}}(y_l|x)} {\pi^\eta_\theta(y_l|x)\pi_{\text{ref}}(y_w|x)})^\beta$ with its expression in terms of $(\frac{\pi^*_\theta(y_w|x)\pi_{\text{ref}}(y_l|x)} {\pi^*_\theta(y_l|x)\pi_{\text{ref}}(y_w|x)})^\beta$ (the unbiased model we wish to train):
\begin{equation}
\begin{split}
\mathcal{L}_\text{UDPO} &=-\log[\sigma(\log\frac{(\frac{\pi^*_\theta(y_w|x)\pi_{\text{ref}}(y_l|x)} {\pi^*_\theta(y_l|x)\pi_{\text{ref}}(y_w|x)})^\beta+a}{1+a\cdot (\frac{\pi^*_\theta(y_w|x)\pi_{\text{ref}}(y_l|x)} {\pi^*_\theta(y_l|x)\pi_{\text{ref}}(y_w|x)})^\beta})]\\
& = -\log \left( \frac{\frac{U^*+a}{1+aU^*}}{1 + \frac{U^*+a}{1+aU^*}} \right) \\
& = -\log \left( \frac{U^*+a}{(1+aU^*) + (U^*+a)} \right) \\
& = -\log \left( \frac{U^*+a}{(1+a)(1+U^*)} \right) \\
& = -\log \left( \frac{U^*+a}{U^*+1} \right) + \log(1+a)
\end{split}
\label{eq:udpo_loss}
\end{equation}

Since $\log(1+a)$ is a constant with respect to $\theta$, it can be discarded during optimization. Substituting the full expression for $U^* = \left(\frac{\pi_{\theta}(y_w|x)\pi_\text{ref}(y_l|x)}{\pi_{\theta}(y_l|x)\pi_\text{ref}(y_w|x)}\right)^\beta$.
We arrive at the final objective function:
$\mathcal{L}_\text{UDPO} = -\log \frac{\left(\frac{\pi_{\theta}(y_w|x)\pi_\text{ref}(y_l|x)}{\pi_{\theta}(y_l|x)\pi_\text{ref}(y_w|x)}\right)^\beta + a}{\left(\frac{\pi_{\theta}(y_w|x)\pi_\text{ref}(y_l|x)}{\pi_{\theta}(y_l|x)\pi_\text{ref}(y_w|x)}\right)^\beta + 1}$.
This formula allows for training the unbiased policy $\pi_{\theta}^*$ directly using the noisy preference dataset. The derivation of Equation~\ref{eq:udpo_loss} enables us to perform less operations of $\sigma$ and $\log$ when calculating the loss.
\end{proof}

\paragraph{Proof of Corollary~\ref{coro:downward _compatibility}}
\begin{proof}
Based on Corollary~\ref{coro:urm_loss} and Corollary~\ref{coro:udpo_loss} When $\eta^*=\hat \eta$, it is clearly proof. We prove the case where $\eta^*<\hat \eta$.

Under the noisy preference distribution, the total unbiased loss is
$\mathcal{L_{P^\eta}}(\Delta) = (1-\eta)*\mathcal{L_\text{unbiased}}(\Delta)+\eta \mathcal{L}_\text{unbiased}(-\Delta)$, the gradient is:
\begin{equation}
\frac{\partial \mathcal{L_{P^\eta}}(\Delta)}{\partial \Delta} = -(1-\eta^*) (1-a) \frac{\exp(\Delta)}{(\exp(\Delta)+a)(\exp(\Delta)+1)} + \eta^* (1-a) \frac{\exp(-\Delta)}{(\exp(-\Delta)+a)(\exp(-\Delta)+1)}.
\end{equation}
We need to prove that: when $\hat{\eta} > \eta^*$, for any $\Delta$, the gradient $\frac{\partial \mathcal{L_{P^\eta}}(\Delta)}{\partial \Delta}$ is always less than 0.
This implies that the objective function $\mathcal{L_{P^\eta}}(\Delta)$ is monotonically decreasing, and thus the optimal solution is achieved when $\Delta \to +\infty$.

The term $\frac{\exp(-\Delta)}{(\exp(-\Delta)+a)(\exp(-\Delta)+1)} = \frac{\exp(-\Delta)\exp(2\Delta)}{[(\exp(-\Delta)+a)\exp(\Delta)][(\exp(-\Delta)+1)\exp(\Delta)]}=\frac{\exp(\Delta)}{(1+a\exp(\Delta))(1+\exp(\Delta))}$. Therefore, we have:

\begin{equation}
\text{Sign}\left( \frac{\partial \mathcal{L_{P^\eta}}}{\partial \Delta} \right) = \text{Sign}\left( \eta^* \frac{\exp(\Delta)}{(1+a\exp(\Delta))(1+\exp(\Delta))} - (1-\eta^*) \frac{\exp(\Delta)}{(\exp(\Delta)+a)(\exp(\Delta)+1)} \right),
\end{equation}
where term $(1-a)>0$ is overlooked.
Eliminating the common positive terms $\frac{\exp(\Delta)}{1+\exp(\Delta)}$, we need to prove the following inequality:
\begin{equation}
\begin{split}
&\frac{\eta^*}{1+a\exp(\Delta)} < \frac{1-\eta^*}{\exp(\Delta)+a}\\
\Rightarrow&\eta^*(\exp(\Delta) + a) < (1-\eta^*)(1 + a\exp(\Delta)) \\
\Rightarrow&\eta^* \exp(\Delta) + \eta^* a < 1 - \eta^* + a\exp(\Delta) - a\eta^* \exp(\Delta)\\
\Rightarrow&\exp(\Delta) (\eta^* - a(1-\eta^*)) < 1 - \eta^*(1+a).
\end{split}
\label{ineq:backward}
\end{equation}
We have $\eta^* - a(1-\eta^*) = \frac{\eta^* - \hat{\eta}}{1-\hat{\eta}}<0$ and $1 - \eta^*(1+a)=1-\frac{\eta^*}{1-\hat\eta}>0$. Therefore, Inequality~\ref{ineq:backward}  always holds true.
\end{proof}

\paragraph{Proof of Theorem~\ref{theo:classification_calibration}}
\begin{proof}


Follow the definitions in \cite{risk_bound}, the generic conditional $\mathcal{L}$-risk is defined as $C_\eta(u) = (1-\eta) \mathcal{L}(u) + \eta \mathcal{L}(-u)$. The optimal conditional $\mathcal{L}$-risk is defined as $H(\eta) = \inf_{u \in \mathbb{R}} C_\eta(u)$.
Furthermore, we define $H^-(\eta) = \inf_{u(1-2\eta)\leq0} C_\eta(u)$ as the optimal risk under the constraint that the sign of $u$ is opposite to that of $1-2\eta$. In this scenario, we consider $0 \le \eta \le 1$ to ensure classification calibration while maintaining $0 \le a < 1$.

We have $C_\eta(u)$ for $\mathcal{L}_{\text{unbiased}}(u) = -\log \frac{\exp(u) + a}{\exp(u) + 1}$ is:
\begin{equation}
\begin{aligned}
C_\eta(u) &= (1-\eta) [\log(1+\exp(u)) - \log(a+\exp(u))] + \eta [\log(1+\exp(-u)) - \log(a+\exp(-u))] \\
&= (1-\eta) \log \frac{1+\exp(u)}{a+\exp(u)} + \eta \log \frac{1+\exp(u)}{1+a\exp(u)}\\
&= \log(1+\exp(u)) - (1-\eta)\log(a+\exp(u)) - \eta \log(1+a\exp(u))
\end{aligned}
\end{equation}

Take the derivative of $u$ to 0:

\begin{equation}
\frac{\partial C_\eta(u)}{\partial u} = \frac{\exp(u)}{1+\exp(u)} - \frac{(1-\eta)\exp(u)}{a+\exp(u)} - \frac{\eta a \exp(u)}{1+a\exp(u)} = 0   
\end{equation}

Let $z = \exp(u)$, and then rearrange the equation:
\begin{equation}
\begin{split}
&\frac{1}{1+z} = \frac{1-\eta}{a+z} + \frac{\eta a}{1+az}    \\
\Rightarrow&(a+z)(1+az) = (1-\eta)(1+z)(1+az) + \eta a(1+z)(a+z)\\
\Rightarrow&az^2 + (a^2+1)z + a = az^2 + (a+1)[1+\eta(a-1)]z + (1-\eta+\eta a^2) \\
\Rightarrow&z \cdot \left[ -(1-a)(a - \eta(a+1)) \right] = (1-a)[1 - \eta(1+a)] \\
\Rightarrow&z = \frac{1 - \eta(1+a)}{\eta(1+a) - a}.
\end{split}
\end{equation}
In order to ensure that $z > 0$ (which means that the $u$ has a real solution), we have $\frac{a}{1+a} < \eta < \frac{1}{1+a}$.

If $\frac{a}{1+a} < \eta < \frac{1}{1+a}$, substituting the optimal solution $z$, we have:
\begin{equation}
\begin{split}
H(\eta) &= \eta [\log(1+z) - \log(z+a)] + (1-\eta) [\log(1+z) - \log(1+az)] \\
&= \mathcal{H}_{\text{bin}}(\eta) - \log(1+a),
\end{split}
\end{equation}
where  $\mathcal{H}_{\text{bin}}(\eta) = -\eta \log \eta - (1-\eta) \log(1-\eta)$ is the binary entropy function.

If $\eta \le \frac{a}{1+a}$,  the optimal solution is  $u \to +\infty$, so we have $H(\eta) = -\eta \log a$.

If $\eta \ge \frac{1}{1+a}$, the optimal solution is $u \to -\infty$, so we have $H(\eta) = -(1-\eta) \log a$.

For $\eta \le \frac{a}{1+a}$ and $\eta \ge \frac{1}{1+a}$, since $H(\eta)$ is symmetric with respect to $\eta = 1/2$, we can uniformly write it as: $$H(\eta) = -\min(\eta, 1-\eta) \log a.$$

For $H^-(\eta) = \inf_{u(1-2\eta)\leq0} C_\eta(u)$, we have $u \leq 0$ when $\eta < \frac{1}{2}$, and $u \geq 0$ when $\eta > \frac{1}{2}$
Therefore, the constrained optimal solution must be achieved at the boundary $u = 0$. Thus, we have:
\begin{equation}
H^-(\eta) = C_\eta(0) = \eta \phi(0) + (1-\eta) \phi(0) = \phi(0)=\log \frac{2}{1+a}. 
\end{equation}
Based on \cite{risk_bound}, A loss function is classification-calibrated if and only if for any $\eta \ne 1/2$, it holds that $H^-(\eta) > H(\eta)$.

In the middle region ($\frac{a}{1+a} < \eta < \frac{1}{1+a}$), the inequality $H^-(\eta) > H(\eta)$ becomes: 
\begin{equation}
\log 2 - \log(1+a) > \mathcal{H}_{\text{bin}}(\eta) - \log(1+a) \iff \log 2 > \mathcal{H}_{\text{bin}}(\eta). 
\end{equation}
The binary entropy function $\mathcal{H}_{\text{bin}}(\eta)$ attains its maximum value of $\log 2$ at $\eta = 1/2$. Therefore, for $\eta \ne 1/2$, the inequality holds strictly.

In the boundary regions ($\eta \le \frac{a}{1+a}$ and $\eta \ge \frac{1}{1+a}$), we observe that the boundary values are symmetric. Specifically, when $\eta = \frac{a}{1+a}$ or $\eta = \frac{1}{1+a}$, the function reaches a maximum of:$H(\eta) = -\frac{a}{1+a} \log a$. We must prove the following inequality:
\begin{equation}
\log \frac{2}{1+a} > -\frac{a}{1+a} \log a.   
\end{equation}
Define the auxiliary function $g(a) \coloneq \log \frac{2}{1+a} + \frac{a}{1+a} \log a$. Our objective is to show that $g(a) > 0$ for $a \in (0, 1)$. We have $g'(a) = \frac{\log a}{(1+a)^2}$. Since $0 < a < 1$, it follows that $\log a < 0$, and consequently $g'(a) < 0$. This demonstrates that $g(a)$ is strictly monotonically decreasing on the interval $(0, 1)$. Finally, we check the boundary condition at $a = 1$: $g(1) = \log \frac{2}{2} + \frac{1}{2} \log 1 = 0$. Because $a<1$, we have $g(a) > 0$.

\end{proof}

\paragraph{Proof of  Corollary~\ref{coro:risk_bound}}
\begin{proof}
Define $\tilde{\psi}(\rho) = H^-(\frac{1+\rho}{2}) - H(\frac{1+\rho}{2})$, where $\rho \in [0,1]$. Since $H(\eta)$ is defined piecewise, $\tilde{\psi}(\rho)$ is also piecewise. Since $\frac{1+\rho}{2} \ge \frac{1}{2}$, we identify the transition point $\rho_0$ by setting $\frac{1+\rho_0}{2} = \frac{1}{1+a}$, which yields $\rho_0 = \frac{1-a}{1+a}$.

For $0 \le \rho < \frac{1-a}{1+a}$, we have:
\begin{equation}
\tilde{\psi}(\rho) = (\log 2 - \log(1+a)) - (\mathcal{H}_{\text{bin}}(\eta) - \log(1+a)) = \log 2 - \mathcal{H}_{\text{bin}}\left(\frac{1+\rho}{2}\right).    
\end{equation}

For $\frac{1-a}{1+a} \le \rho \le 1$, we have:
\begin{equation}
\tilde{\psi}(\rho) = \log \frac{2}{1+a} - \left( -\frac{1-\rho}{2} \log a \right) = \log \frac{2}{1+a} + \frac{1-\rho}{2} \log a.   
\end{equation}

Convexity analysis: the first part $\log 2 - \mathcal{H}_{\text{bin}}(\frac{1+\rho}{2})$ is convex. The second part is linear (and also convex). At $\rho_0 = \frac{1-a}{1+a}$, the function values and first derivatives of the two parts are equal (continuous and smooth). Therefore, $\tilde{\psi}(\rho)$ is a convex function over the domain $[0, 1]$. We define $\psi$ as the convex hull of $\tilde{\psi}$; thus, $\psi=\tilde{\psi}$.

According to Theorem~1 in \cite{risk_bound}, we have:
\begin{equation}
\psi(R(f) - R^*) \le R_{\mathcal{L}_{\text{unbiased}}}(f) - R_{\mathcal{L}_{\text{unbiased}}}^*.    
\end{equation}

\end{proof}

\section{Experiments}
\label{ap:exp}
\subsection{Experiment Details.}
\paragraph{Experiment Setting.} All experiments are based on Pytorch \cite{pytorch} and  TRL \cite{trl} libraries, using 8 NVIDIA Pro 6000 (96GB) GPUs.
For all the training, we use the AdamW optimizer, warmup ratio 0.1, batch size 128, maximum gradient norm 10. 
For SFT model training, we train the model for 1 epoch with learning rate 2e-5.
For reward model training, we train the model for 3 epochs with learning rate 1e-5. 
For supervised preference training, we train the policy for 3 epochs with learning rate 5e-6 for Llama-3.2-3B and Qwen-3-1.7B, and 1e-6 for Llama-3.1-8B and Qwen-3-8B. For each dataset, we use the first 1000 different test samples as the test set.

\paragraph{Baseline Hyperparameters.} For the regularization parameter $\beta$, following \cite{dpo}, we set it to 0.1 for the dialogue datasets HH and UFB, and 0.5 for the summarization dataset TL;DR. We conduct a hyperparameter search for each baseline on all noise level cases to ensure a completely fair experiment. For reward model training, we search $\epsilon \in \{0.1,0.2,0.4\}$ for cRM and rRM, and $a \in \{0.4,0.6,0.8\}$ for URM and $\alpha$-URM. For supervised preference training,  we search $\epsilon \in \{0.1,0.2,0.4\}$ for cDPO and rDPO, $\beta' \in \{0.5,1,2\}$ for Dr.DPO, and $a \in \{0.4,0.6,0.8\}$ for UDPO and $\alpha$-UDPO. For practitioners, simply using $\alpha$-URM/$\alpha$-UDPO with $a = 0.8$ is usually sufficient to achieve strong performance.

\subsection{Additional Experiments}
\paragraph{Human Evaluation.} 
We conduct human evaluation on the case of Qwen-3-1.7B and HH with a 40\% flip rate. We use the first 100 samples from the test set, and the human evaluation is independently performed by three annotators. The results are reported in Table~\ref{tab:human}. As shown, the human evaluation results are consistent with those obtained from GPT-5.

\begin{table}[h]
\centering
\caption{Win rates (\%) of different methods vs. the SFT model using GPT-5 and human judges. The top-2 best results are in bold.}
\label{tab:human}
\begin{tabular}{c|cc}
\toprule
HH 40\% & GPT-5   & Human      \\
\midrule
DPO     & 62.0 & 62.6 $\pm$ 2.8 \\
\textbf{UDPO}    & \textbf{71.0} & \textbf{67.6 $\pm$ 2.0} \\
\textbf{$\alpha$-UDPO}  & \textbf{68.0} & \textbf{69.0 $\pm$ 2.9} \\
\bottomrule
\end{tabular}
\end{table}

\paragraph{Instance-Dependent Noise.} 
We conduct experiments for instance-dependent noise. We first use a trained standard RM to obtain normalized reward scores for each sample in the HH dataset. For each sample, we define $\eta_x$ as
$\frac{\min\{r(x, y_w), r(x, y_l)\}}{r(x, y_w) + r(x, y_l)}.$
Intuitively, answer pairs with more similar reward scores are more likely to be mislabeled. We then flip labels according to $\eta_x$, with the noise rates rescaled to average corruption levels of 20\% and 40\%. The results using Qwen-3-1.7B are 
reported in Table~\ref{tab:idn}. 
The results highlight the excellent performance of our methods under instance-dependent noise.

\begin{table}[h]
\centering
\caption{Win rates (\%) of different methods vs. the SFT model under instance-dependent noise. The top-2 best results are in bold.}
\label{tab:idn}
\begin{tabular}{c|cc}
\toprule
HH              & IDN 20\%         & IDN 40\%         \\
\midrule
RM              & 56.8          & 46.0          \\
rRM             & 58.0          & 48.2          \\
\textbf{URM}    & \textbf{59.5} & \textbf{54.6} \\
\textbf{$\alpha$-URM}  & \textbf{60.2} & \textbf{55.2} \\
\midrule\midrule
HH              & IDN 20\%         & IDN 40\%         \\
\midrule
DPO             & 68.7          & 56.4          \\
rDPO            & 73.3          & 63.7          \\
\textbf{UDPO}   & \textbf{77.9} & \textbf{68.0} \\
\textbf{$\alpha$-UDPO} & \textbf{78.2} & \textbf{67.6} \\
\bottomrule
\end{tabular}
\end{table}

\paragraph{Closed Loop RLHF.} We evaluate the effectiveness of URM within the full RLHF loop. We use GRPO as the RL
algorithm and conduct experiments using Qwen-3-1.7B on the HH dataset.
We set the number of generations to 2 and train for 3 epochs. 
The results are reported in Table~\ref{tab:grpo}. These results demonstrate that our methods significantly improve the final RL policy performance in the
closed loop RLHF.

\begin{table}[h]
\centering
\caption{Win rates (\%) of different methods using GRPO vs. the SFT model. The top-2 best results are in bold.}
\label{tab:grpo}
\begin{tabular}{c|ccc}
\toprule
GRPO           & HH 0\%           & HH 20\%          & HH 40\%          \\
\midrule
RM             & 55.0          & 54.0        & 48.5          \\
\textbf{URM}   & \textbf{60.3} & \textbf{58.7} & \textbf{56.5} \\
\textbf{$\alpha$-URM} & \textbf{61.6} & \textbf{58.2} & \textbf{57.0} \\
\bottomrule
\end{tabular}
\end{table}

\paragraph{Larger Model.}  We conduct an experiment using Qwen-3-14B on the UFB dataset, comparing our UDPO/$\alpha$-UDPO against standard DPO. The results are reported in Table~\ref{tab:14b}. It can be observed that our methods remain highly effective on the larger model.

\begin{table}[h]
\centering
\caption{Win rates (\%) of different methods vs. the SFT model. The top-2 best results are in bold.}
\label{tab:14b}
\begin{tabular}{c|c}
\toprule
UFB    & Qwen-3-14B       \\
\midrule
DPO             & 65.7        \\
\textbf{UDPO}   & \textbf{73.0} \\
\textbf{$\alpha$-UDPO} & \textbf{73.8} \\
\bottomrule
\end{tabular}
\end{table}

\end{document}